\documentclass{article}





\usepackage[final]{neurips_2022}
\usepackage{graphicx}
\usepackage[utf8]{inputenc} 
\usepackage[T1]{fontenc}    
\usepackage{hyperref}       
\usepackage{url}            
\usepackage{booktabs}       
\usepackage{amsfonts}       
\usepackage{nicefrac}       
\usepackage{microtype}      
\usepackage{xcolor}
\usepackage{amsmath}
\usepackage{physics}
\usepackage{algpseudocode}
\usepackage{algorithm}
\usepackage{float}
\usepackage{tabularx}
\usepackage{subfigure}
\usepackage{lipsum}
\usepackage{color,soul}
\usepackage{amsfonts}
\usepackage{algpseudocode}
\usepackage{graphicx}
\usepackage{epstopdf}\usepackage{pdfpages}
\usepackage{amsopn}
\usepackage{booktabs}
\usepackage{multirow}
\usepackage{xcolor}
\usepackage{lipsum}
\usepackage{soul}
\usepackage{tikz}

\usepackage{array, makecell}
\setcellgapes{5pt}
\usepackage{amssymb,amsmath,amsthm}
\usepackage[title]{appendix}
\usepackage{mathtools} 
\newtheorem{theorem}{Theorem}

\newtheorem{lemma}[theorem]{Lemma}
\usepackage{todonotes}

\newtheorem*{remark}{Remark}

\usepackage{mathtools} 

\title{Learning Interface Conditions in Domain Decomposition Solvers}

\author{%
  Ali Taghibakhshi \\
  Mechanical Science and Engineering\\
  University of Illinois Urbana-Champaign\\
  Urbana, IL 61801, USA \\
  \texttt{alit2@illinois.edu} \\
  \And
  Nicolas Nytko \\
  Computer Science\\
  University of Illinois Urbana-Champaign\\
  Urbana, IL 61801, USA \\
  \texttt{nnytko2@illinois.edu} \\
  \And
  Tareq Zaman \\
  Scientific Computing Program\\
  Memorial University of Newfoundland\\
  and Labrador\\
  St. John's, NL, Canada\\
  \texttt{tzaman@mun.ca}\\
  \And
  Scott MacLachlan \\
  Mathematics and Statistics\\
  Memorial University of Newfoundland\\
  and Labrador\\
  St. John's, NL, Canada\\
  \texttt{smaclachlan@mun.ca}\\
  \And
  Luke Olson \\
  Computer Science\\
  University of Illinois Urbana-Champaign\\
  Urbana, IL 61801, USA \\
  \texttt{lukeo@illinois.edu} \\
  \And
  Matthew West \\
  Mechanical Science and Engineering\\
  University of Illinois Urbana-Champaign\\
  Urbana, IL 61801, USA \\
  \texttt{mwest@illinois.edu} \\
}

\begin{document}

\maketitle

\begin{abstract}
Domain decomposition methods are widely used and effective in the approximation
of solutions to partial differential equations.  Yet the \textit{optimal} construction of
these methods requires tedious analysis and is often available only in
simplified, structured-grid settings, limiting their use for more complex problems.
In this work, we generalize optimized Schwarz domain decomposition methods to unstructured-grid problems,
using Graph Convolutional Neural Networks (GCNNs) and unsupervised learning
to learn optimal modifications at subdomain interfaces. A key
ingredient in our approach is an improved loss function, enabling
effective training on relatively small problems, but robust
performance on
arbitrarily large problems, with computational cost linear in problem size. The
performance of the learned linear solvers is compared with both classical and
optimized domain decomposition algorithms, for both structured- and unstructured-grid problems.
\end{abstract}

\section{Introduction}

Domain decomposition methods (DDMs)~\cite{MR2104179, MR1857663, MR3450068} are
highly effective in solving the linear and nonlinear systems of equations that
arise from the numerical approximation of solutions to partial differential equations (PDEs).  While most effective on elliptic boundary-value problems, DDMs can also be applied to nonlinear problems, either using their nonlinear variants, or successively solving linearizations. Time-dependent problems are normally solved by using a time stepping algorithm in the time domain for implicit methods, which require the solution of a spatial problem for each time step. Of these methods, Schwarz methods are particularly popular given their
relative simplicity and ease of parallelization.  The common theme is to break the
global problem into subproblems, derived either by projection or by
discretizing the same PDE over a physical subdomain, and to use solutions on the
subdomains as a preconditioner for the global discretization. Classical Schwarz
methods generally make use of Dirichlet or Neumann boundary conditions for
these subdomain problems, while Optimized Schwarz Methods (OSMs) aim to improve
the convergence of the algorithm by using more general interface conditions~\cite{GHN_optimized_2000}.
Notably,~\cite{MR2867648} demonstrates that optimal,
but non-local, interface conditions exist for more general decompositions.

Much of the OSM literature considers only one-level additive
Schwarz methods, although multilevel extensions do exist.
For one-level methods (i.e., domain decomposition approaches
without a ``coarse grid''), restricted additive Schwarz (RAS)
approaches~\cite{cai1999restricted} are arguably the most common; optimized restricted
additive Schwarz (ORAS) methods are considered in~\cite{st2007optimized}.
The OSM idea has also
been extended to asynchronous Schwarz methods~\cite{MR3679933}, where the
computations on each subdomain are done using the newest information available
in a parallel computing environment without synchronizing the solves on each
subdomain.

With a recent focus on machine
learning (ML) techniques for solving PDE systems~\cite{raissi2019physics,
li2020fourier}, there
is also effort to apply learning-based methods to improve the
performance of iterative solvers for PDEs, including DDM and algebraic
multigrid (AMG) methods.  Within AMG methods, ML techniques have been applied
to learning interpolation operators~\cite{greenfeld2019learning,luz2020learning}
and to coarse-grid selection in reduction-based
AMG~\cite{taghibakhshi2021optimization}. Of particular note here is the loss
function employed in~\cite{greenfeld2019learning,luz2020learning}, where they
use unsupervised learning to train a graph neural network to minimize the
Frobenius norm of the error-propagation operator of their iterative method.
Within DDM, significant effort has been invested in combining ML techniques
with DDM, as in~\cite{heinlein2021combining}, where two main families of approaches are given:
1) using ML within classical DDM  
methods to improve convergence, and
2) using deep neural networks,  
such as Physics Informed Neural Networks
(PINNs)~\cite{raissi2019physics}, as a discretization module and solver for DDM
problems. In~\cite{heinlein2019machine}, a fully connected neural network is
used to predict the geometric locations of constraints for coarse
space enhancements in an adaptive Finite Element Tearing and Interconnecting-Dual
Primal (FETI-DP) method. Using the continuous formulation of DDM, the so-called
D3M~\cite{li2019d3m} uses a variational deep learning solver, implementing
local neural networks on physical subdomains in a parallel fashion.
Likewise, Deep-DDM~\cite{li2020deep} utilizes PINNs to discretize and solve
DDM problems, with coarse space corrections~\cite{mercier2021coarse} being used
to improve scalability.

In this paper, we advance DDM-based solvers by developing a
framework for learning optimized Schwarz preconditioners.  A key
aspect of this is reconsidering the loss function to use a
more effective relaxation of the ideal objective function than Frobenius
norm minimization~\cite{greenfeld2019learning, luz2020learning}.
Moreover, the approach introduced here offers an opportunity to reconsider existing
limitations of optimized Schwarz methods, where optimal parameter choice
is based on Fourier analysis and requires a highly regular subdomain structure,
such as in the classical cases of square domains split into two equal subdomains
or into regular grids of square subdomains.  Our framework learns the optimized
Schwarz parameters via training on small problem sizes, in a way that
generalizes effectively to large problems, and in a way that allows us to
consider both irregular domains and unstructured grids, with no extraordinary
limitations on subdomain shape.  Furthermore, the evaluation time of our
algorithm scales linearly with problem size. This allows significant
freedom in defining optimized Schwarz methods, in comparison to classical approaches,
allowing us to explore the potential benefits of optimized Schwarz over
classical (restricted) additive methods on unstructured grids for the first
time.

\section{Background}\label{sec:background}

Let $\Omega \subset \mathbb{R}^2$ be an open set, and
consider the positive-definite Helmholtz problem
\begin{equation}\label{eq:Helmholtz}
  Lu = (\eta - \Delta)u = f \quad\text{in $\Omega$},
\end{equation}
with inhomogeneous Dirichlet conditions imposed on the boundary $\partial\Omega$. In~\eqref{eq:Helmholtz}, the parameter $\eta>0$ represents a \textit{shift} in the Helmholtz problem.
In the numerical results below, we consider both finite-difference discretizations of~\eqref{eq:Helmholtz} on regular grids, as well as piecewise linear finite-element (FE) discretizations on arbitrary triangulations.  In both cases, we denote the set of degrees of freedom as $D$, and note that these are in a one-to-one correspondence with the nodes of the underlying mesh. Consider a decomposition of $D$ into non-overlapping subdomains $D_i^{0}, i\in\{1, 2, \ldots, S\}$ such that each node is contained within exactly one subdomain $D_{i}^{0}$, yielding $\cup D_i^{0} = D$. In this subdomain notation, the superscript denotes the amount of overlap in the subdomains, which is zero for the non-overlapping subdomains that we first consider. Let $R_{i}^0$ be the restriction operator onto the set of degrees of freedom (DoFs) in $D_{i}^0$, and let $\left(R_{i}^0\right)^{T}$ be the corresponding extension operator from $D_{i}^{0}$ into set $D$. Then, an FE discretization of the Helmholtz problem leads to a linear system of the form $Ax = b$, where $A $ is the global stiffness matrix and $A_{i}^0 = R_{i}^0A\left(R_{i}^0\right)^{T}$ is the subdomain stiffness matrix for $D_{i}^0$.  We note that alternate definitions to the Galerkin projection for $A_i^0$ are possible, and are commonly considered in optimized Schwarz settings (as noted below).

In the case of restricted additive Schwarz (RAS)~\cite{cai1999restricted}, the subdomains are extended to allow for overlap: nodes near the ``boundary'' of their subdomain are potentially included in two or more subdomains. We denote the amount of overlap by $\delta \in \mathbb{N}$, defining subdomains $D_{i}^{\delta}$ recursively, by $D_{i}^{\delta} = D_{i}^{\delta-1} \cup \left\{ j \mid a_{kj}\neq 0 \text{ and }k\in D_{i}^{\delta-1}\right\}$ for $\delta > 0$.  The conventional RAS preconditioner is defined as
\begin{equation}
M_{\text{RAS}} = \sum\limits_{i=1}^{S} (\tilde{R}^{\delta}_{i})^{T}(A_{i}^{\delta})^{-1}{R_{i}^{\delta}},
\end{equation}
where $R_{i}^{\delta}$ is the standard restriction operator to subdomain $\Omega_{i}^{\delta}$, $\tilde{R}_{i}^{\delta}$ is a modified restriction operator from $D$ to the DoFs in $D_i^{\delta}$ that takes nonzero values only for DoFs in $D_i^0$, and $A_{i}^{\delta} = {(R_{i}^{\delta})}^{T}AR_{i}^{\delta}$. Figure~\ref{fig:imp_example} shows an example unstructured grid with two subdomains and overlap $\delta = 1$.

In an optimized Schwarz setting, we modify the subdomain systems, $A_i^\delta$.   Rather than using a Galerkin projection onto $D_i^\delta$, we rediscretize~\eqref{eq:Helmholtz} over the subdomain of $\Omega$ corresponding to these DoFs, imposing a Robin-type boundary condition on the boundary of the subdomain. We define this matrix to be $\tilde{A}_{i}^{\delta} = A_i +L_{i}$, where $A_i$ is the term resulting from discretization of~\eqref{eq:Helmholtz} with Neumann boundary conditions, and $L_i$ is additional the term resulting from the Robin-type condition, as in:
\begin{align}
                                \text{Dirichlet: } u & = g_{\text{D}}(x), &
             \text{Neumann: } \vec{n}\cdot\nabla u & = g_{\text{N}}(x), &
  \text{Robin: } \alpha u + \vec{n}\cdot\nabla u & = g_{\text{R}}(x),
\end{align}
where $\vec{n}$ is the outward normal to the edge on the boundary and $g$ denotes inhomogeneous data.

The matrix $L_{i}$ has the same dimensions as $A_{i}$, so that $\tilde{A}_{i}^{\delta}$ is well-defined. However, it has a significantly different sparsity pattern, with nonzero entries only in rows/columns corresponding to nodes on the boundary of subdomain $D_{i}^{\delta}$. In practice, we identify a cycle or path in the graph corresponding to $A_{i}$ with the property that every node in the cycle is on the boundary of $D_{i}^{\delta}$ but not the boundary of $D$ (the discretized domain), and then restrict the nonzeros in $L_{i}$ to the entries corresponding to the edges in this cycle/path (including self-edges, corresponding to entries on the diagonal of $L_{i}$).
%
\begin{figure}
  \centering
  \includegraphics{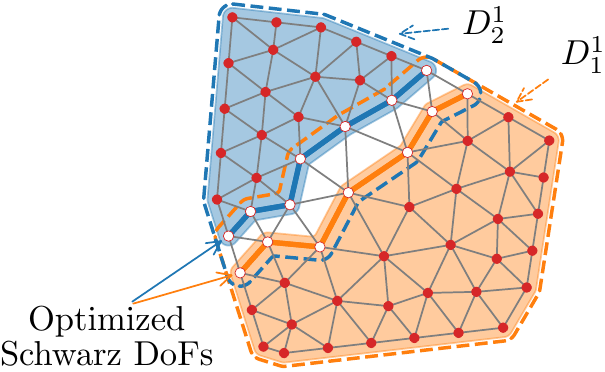}
    \caption{Two subdomains with overlap $\delta=1$ on a 58-node unstructured grid. The blue and orange shading denotes the original non-overlapping partitions ($D_1^0$ and $D_2^0$), while the blue and orange dashed outlines show the overlapping subdomains ($D_1^1$ and $D_2^1$).  Nodes belonging to only one subdomain are marked with a solid circle, while those in white belong to both subdomains.  The connections in the boundary matrices $L_1$ and $L_2$ are denoted by the edges shaded in blue (for $L_1$) and orange (for $L_2$).}\label{fig:imp_example}
\end{figure}

Using this notation, the ORAS preconditioner can be written as
\begin{equation}\label{eq:M_ORAS}
M_{\text{ORAS}} = \sum\limits_{i = 1}^S\left(\tilde{R}_{i}^\delta\right)^{T}\left(\tilde{A}_{i}^{\delta}\right)^{-1}R_{i}^{\delta}.
\end{equation}
Our aim is to learn the values in the matrices $L_i$, aiming to outperform the classical choices of these values predicted by Fourier analysis, in the cases where those values are known, and to learn suitable values for cases, such as finite-element discretizations on unstructured grids, where no known optimized Schwarz parameters exist.  We optimize the values for the case of stationary (Richardson) iteration, but evaluate the performance of the resulting methods both as stationary iterations and as preconditioners for FGMRES\@.

\paragraph{Graph Neural Networks (GNNs):} Applying learning techniques to graph structured data necessitates stepping beyond multilayer perceptron (MLP) and conventional convolutional neural networks (CNN) to a type of network that leverages the underlying graph nature of the problem, namely graph convolutional neural networks (GCNNs). GCNNs are typically divided into two categories: spectral and spatial~\cite{wu2020comprehensive}. Spectral GCNNs, first introduced by Bruna et al.~\cite{bruna2013spectral}, consider a graph Laplacian eigenbasis and define convolution as diagonal operators. As such, spectral GCNN methods suffer from time complexity problems due to the necessity for the eigendecomposition of the Laplacian matrix. Nevertheless, in follow-up work~\cite{defferrard2016convolutional, kipf2016semi}, remedies have been proposed to mitigate this. Unlike spectral methods, spatial GCNNs consider local propagation of information in graph domains as a convolution graph. One popular framework is the message passing neural network (MPNN)~\cite{gilmer2017neural}, which is based on sharing information among neighbor nodes in each round of a convolution pass. This has been generalized~\cite{battaglia2018relational} by introducing a configurable graph network block structure consisting of node and edge convolution modules and a global attribute. In an effort to alleviate computational complexity of GCNNs~\cite{du2017topology}, topology adaptive graph convolution networks (TAGCN) can be constructed by defining learnable filters.
This is not only computationally simpler, but also allows for adapting to the topology of the graphs when scanning them for convolution.

\section{Method}\label{sec: method}

\subsection{Optimization problem and the loss function}

Throughout this paper, we use $\|\cdot\|$ to denote the $\ell^2$ norm of a matrix or vector, $\|A\|_{F}$ for the Frobenius norm of $A$, and $\rho(A)$ as the spectral radius of $A$.  The optimization problem that we seek to solve is to find optimal values for the entries in the matrices $L_i$, constrained by given sparsity patterns, to minimize $\rho(T)$, where $T = I-M_{\text{ORAS}}A$ is the error-propagation operator for the stationary iteration corresponding to $M_{\text{ORAS}}$ defined in~\eqref{eq:M_ORAS}.  The spectral radius $\rho(T)$ corresponds to the \textit{asymptotic convergence factor} of the stationary iteration, giving a bound on the asymptotic convergence of the method.  Formally defined in terms of the extremal eigenvalue of $T^T T$ (since $T$ is not symmetric), direct minimization of $\rho(T)$ is difficult since backpropagation of an eigendecomposition is numerically unstable~\cite{wang2019backpropagation}.  To overcome this, Luz et al.~\cite{luz2020learning} propose to relax the minimization of $\rho(T)$ (for a similar AMG preconditioner) to minimizing the Frobenius norm of $T$, $\|T\|_F$.  In our experiments, however, we find that this is insufficient, leading to preconditioners that do not scale.  One reason is that while the Frobenius norm is an upper bound on the spectral radius, it does not appear to be a suitably ``tight'' bound for use in this context (see Section~\ref{sec:result_uns} and Figure~\ref{fig:better_loss}).  Instead, we use a relaxation inspired by Gelfand's formula, that $\rho(T) = \displaystyle\lim_{K\rightarrow\infty} \|T^K\|^{\frac{1}{K}}$, and the common bound that
\begin{equation}
  \label{eq:rho_bound}
\rho(T) \leq \|T^K\|^{\frac{1}{K}} = \sup_{x\neq 0} \left(\frac{\|T^K x\|}{\|x\|}\right)^\frac{1}{K} = \sup_{x: \|x\|=1} (\|T^{K}x\|)^\frac{1}{K}
\end{equation}
for some finite $K\in \mathbb{N}$.  This results in the optimization problem
\begin{equation}
  \label{eq:loss_func}
  \min_{\substack{L_i, i=1,2,\ldots, S \\ \text{sparsity of $L_i$}}} \sup_{x: \|x\|=1}  \|T^{K}x\|.
\end{equation}

\subsection{Numerical evaluation of the loss function}

We denote the action of evaluating the GNN by $f^{(\theta)}$ (where $\theta$ represents the network parameters), and consider a discretized problem with DoF set $D$, of size $n$. The set $D$ can be decomposed into subdomains either by using fixed geometric choices of the subdomain (e.g., for finite-difference discretizations), using the METIS graph partitioner~\cite{karypis1998fast}, or a $k$-means-based clustering algorithm (best known as Lloyd's algorithm which has $O(n)$ time complexity)~\cite{bell2008algebraic, lloyd1982least}.  For unstructured problems, we use a $k$-means-based algorithm, decomposing $D$ to subdomains $D_{i}$ for $i = 1, 2, \ldots, S$, with overlap $\delta$; see Supplementary Materials for details.  The GNN then takes $D$ and its decomposition as inputs, as well as sparsity constraints on the matrices $L_i$ for $i=1,2,\ldots, S$, and outputs values for these matrices:
\begin{equation}
\label{eq:GNNinout}
{L_{1}^{(\theta)}, L_{2}^{(\theta)}, \ldots, L_{S}^{(\theta)}} \leftarrow f^{(\theta)}(D).
\end{equation}
Using the learned subdomain interface matrices, we then obtain the modified MLORAS (Machine Learning Optimized Restricted Additive Schwarz) operator, $M^{(\theta)}_{\text{ORAS}}$, simply using $\tilde{A}_i^\delta = A_i + L_i^{(\theta)}$ in~\eqref{eq:M_ORAS}.  We denote the associated error propagation operator by $T^{(\theta)} = I - M^{(\theta)}_{\text{ORAS}}A$.

While Gelfand's formula and the associated upper bound in~\eqref{eq:rho_bound} are valid in any norm, it is natural to consider them with respect to the $\ell^2$ norm in this setting.  However, this raises the same issue as encountered in~\cite{luz2020learning}, that it generally requires an eigendecomposition to compute the norm.  To avoid this, we use a stochastic sampling of $\left\|\left(T^{(\theta)}\right)^K\right\|$, generated by the sample set $X\in\mathbb{R}^{n\times m}$ for some $m\in\mathbb{N}$, given as
\begin{equation}
X = [x_1, x_2, \ldots, x_m], \forall_{j} \; x_j\sim \mathbb{R}^{n} \; \text{uniformly}, \|x_j\| = 1.
\end{equation}
Here, we randomly select $m$ points uniformly on a unit sphere in $\mathbb{R}^{n}$, which can be done using the method in~\cite{box1958note}. We then define
\begin{equation}
Y^{(\theta)} = \left\{\left\|\left( T^{(\theta)}\right)^{K}x_1\right\|, \left\|\left(T^{(\theta)}\right)^{K}x_2\right\|, \ldots, \left\|\left(T^{(\theta)}\right)^{K}x_m\right\|\right\},
\end{equation}
taking each column of $X$ as the initial guess to the solution of the homogeneous problem $Ax=0$ and taking $K$ steps of the stationary algorithm to generate $\left( T^{(\theta)}\right)^{K}x_j$.  Since we normalize each column of $X$ to have $\|x_j\| = 1$, the value of $\left\|\left( T^{(\theta)}\right)^{K}x_j\right\|$ serves as a lower bound for $\left\|\left( T^{(\theta)}\right)^{K}\right\|$.  Thus,
taking the maximum of the values in $Y$ provides a practical loss function that we use below, defining
\begin{equation}\label{eq:numeric_loss}
\mathcal{L}^{(\theta)} = \max(Y^{(\theta)}).
\end{equation}
We note similarities between this loss function and that used in~\cite{KATRUTSA2020112524}, but that we are able to use the maximum of $Y^{(\theta)}$ (giving a better approximation to the norm) in our context instead of averaging.

Ultimately, the cost of our algorithm depends strongly on the chosen values of $m$ and $K$.  For sufficiently large values of $m$, we now show that the maximum value in $Y^{(\theta)}$ is an arbitrarily good approximation to $\left\|\left(T^{(\theta)}\right)^K\right\|^{\frac{1}{K}}$ (in the statistical sense).
\begin{theorem}\label{thm:optimality}
  For any nonzero matrix $T$, $\epsilon>0$, and $\delta <1$, there exist $M, K\in\mathbb{N}$ such that for any $m>M$, if one chooses $m$ points, $x_j$, uniformly at random from $\{x\in\mathbb{R}^n, \|x\| = 1\}$, then $Y = \left\{\left\|T^K x_1\right\| ,\left\|T^K x_2\right\|, \ldots, \left\|T^K x_m\right\|\right\}$ satisfies
  \begin{equation}
  P\left(\left|\rho(T) - \max(Y)^{\frac{1}{K}} \right| <\epsilon\right) > 1 - \delta.
  \end{equation}
  \end{theorem}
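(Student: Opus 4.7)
The plan is to split $|\rho(T) - \max(Y)^{1/K}|$ by the triangle inequality through the intermediate quantity $\|T^K\|^{1/K}$, controlling the Gelfand gap $\|T^K\|^{1/K} - \rho(T)$ by taking $K$ large and the sampling gap $\|T^K\|^{1/K} - \max(Y)^{1/K}$ by taking $m$ large. The quantifiers must be chosen in the right order: first $K$ (from Gelfand's formula), then a tolerance $\eta$ for being near the top singular direction of $T^K$ (depending on $K$), and only then the threshold $M$ (depending on $\eta$ and the ambient dimension~$n$ of $T$).

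First, by Gelfand's formula $\|T^K\|^{1/K}\to\rho(T)$ as $K\to\infty$, combined with the standing inequality $\|T^K\|^{1/K}\geq\rho(T)$, I pick $K_0$ such that $\|T^{K_0}\|^{1/K_0}-\rho(T) < \epsilon/2$. The case $\rho(T)=0$ is trivial: then $T$ is nilpotent, so $T^{K_0}=0$ for $K_0$ large, forcing $\max(Y)=0$ and the desired inequality immediately. Otherwise set $B := T^{K_0}$ with $\|B\|>0$.

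Next, for the sampling step, let $B = U\Sigma V^T$ be the SVD of $B$, with leading right singular vector $v_1$. Writing a unit vector $x$ as $x=Vy$ with $\|y\|=1$ gives $\|Bx\|^2 = \sum_i \sigma_i^2 y_i^2 \geq \sigma_1^2\langle v_1,x\rangle^2$, so $\langle v_1,x\rangle^2 \geq 1-\eta$ implies $\|Bx\|\geq \|B\|\sqrt{1-\eta}$. The set of such unit vectors is a pair of antipodal spherical caps whose normalized surface measure $p = p(\eta,n)>0$ is strictly positive on the fixed-dimension sphere (expressible through an incomplete beta integral). I then choose $\eta>0$ small enough that $\|B\|^{1/K_0}\bigl(1-(1-\eta)^{1/(2K_0)}\bigr) < \epsilon/2$, which is possible since $\|B\|^{1/K_0}$ is a fixed finite constant and $(1-\eta)^{1/(2K_0)}\to 1$ as $\eta\to 0^+$.

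Finally, with the $x_j$ drawn i.i.d.\ uniformly on the unit sphere, the probability that none of the $m$ samples lands in either cap is $(1-p)^m$. Picking $M$ with $(1-p)^M<\delta$, for any $m>M$, with probability at least $1-\delta$ some sample $x_{j^*}$ lies in the cap, giving $\max(Y)\geq \|Bx_{j^*}\|\geq \|B\|\sqrt{1-\eta}$. Since trivially $\max(Y)\leq \|B\|$, this forces $0\leq \|B\|^{1/K_0}-\max(Y)^{1/K_0}<\epsilon/2$, and combining with the Gelfand step by the triangle inequality yields $|\rho(T)-\max(Y)^{1/K_0}|<\epsilon$ with probability at least $1-\delta$. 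The main obstacle is establishing the clean geometric estimate $\|Bx\|\geq \|B\|\sqrt{1-\eta}$ on the cap and verifying that its measure is genuinely positive in the fixed ambient dimension (it decays with $n$, which is why the statement is for a fixed matrix $T$ rather than uniformly in size); after that, the probabilistic wrap-up is routine.
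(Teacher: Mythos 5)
Your proposal is correct, and it follows the same skeleton as the paper's proof: Gelfand's formula supplies the exponent $K$, a spherical cap of positive measure supplies the sampling threshold $M$ via a $(1-p)^m$ bound, and a triangle inequality combines the two halves. Where you genuinely diverge is in the key quantitative step bounding $\|T^K\|^{1/K}-\max(Y)^{1/K}$. The paper centers a \emph{metric} cap of radius $\tilde\epsilon^K$ on a maximizer $x^*$ of $\|T^Kx\|$ and then needs two ingredients: the reverse triangle inequality $\|T^Kx^*\|-\|T^Kx_r\|\le\|T^K\|\,\|x^*-x_r\|$, and an auxiliary lemma ($a^{1/K}-b^{1/K}\le(a-b)^{1/K}$ for $0\le b\le a$, proved separately in the appendix) to convert that additive estimate into one on the $K$-th roots. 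You instead define the cap spectrally, via the alignment condition $\langle v_1,x\rangle^2\ge 1-\eta$ with the leading right singular vector, which yields the \emph{multiplicative} bound $\|Bx\|\ge\|B\|\sqrt{1-\eta}$ directly from the SVD; the $K$-th root then passes through trivially as a factor $(1-\eta)^{1/(2K)}$, so no root-subadditivity lemma is needed. Your route is arguably cleaner on that point, and you also explicitly dispose of the nilpotent case $\rho(T)=0$, where the paper's normalization $\tilde\epsilon=\epsilon/(2\|T^K\|^{1/K})$ would degenerate for large $K$ (an edge case irrelevant in the intended application but left implicit in the paper). Both arguments share the same caveat, which you correctly flag: the cap measure, and hence $M$, depends on the fixed ambient dimension $n$, so the guarantee is per-matrix rather than uniform in problem size.
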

\begin{proof}

{According to Gelfand's theorem, there exists $L\in\mathbb{N}$ such that $\forall{\ell>L}$, $\left|\rho(T) - \displaystyle\sup_{x: \|x\|=1}  \|T^{\ell}x\|^{\frac{1}{\ell}}\right| < \frac{\epsilon}{2}$. Take any $K \ge L$ and let $\tilde{\epsilon} = \frac{\epsilon}{2\|T^K\|^{\frac{1}{K}}}$.} Since $\mathbb{R}^n$ is finite-dimensional, there exists an $x^*\in\mathbb{R}^n,\;\|x^*\|=1$ such that $\displaystyle\sup_{x: \|x\|=1}  \|T^{K}x\| = \|T^{K}x^*\|$. Denote the volume of the surface of the $n$-dimensional sphere of unit radius around the origin in $\mathbb{R}^n$ by $C_{\text{tot}}$, and the volume of the region on this sphere within radius $\tilde{\epsilon}^K$ of $x^*$ by $C_{\tilde{\epsilon}, K}$.

  Given $\delta < 1$, let $M\in\mathbb{N}$ satisfy $M \geq \frac{\log(\delta)}{\log\left(1-\frac{C_{\tilde{\epsilon}, K}}{C_{\text{tot}}}\right)}$.  Then,
\begin{equation}
P\left(\|x^*-x_i\|>\tilde{\epsilon}^K\text{ for all }i\right) <  \left(1-\frac{C_{\tilde{\epsilon}, K}}{C_{\text{tot}}}\right)^M \le \delta.
\end{equation}
Thus, with probability of at least $1-\delta$, we expect at least one point from a selection of $m>M$ points uniformly distributed on the sphere of radius unit radius to be in $C_{\tilde{\epsilon},K}$.  Let that point be $x_r$, giving
\begin{align}
  \left\|T^K x^*\right\|^\frac{1}{K}-\left\|T^K x_r\right\|^\frac{1}{K}&\leq\left(\left\|T^K x^*\right\|-\left\|T^K x_r\right\|\right)^{\frac{1}{K}} \\
  &\leq \left(\left\|T^K(x^*-x_r)\right\|\right)^{\frac{1}{K}}\\
  &\le\left\|T^K\right\|^{\frac{1}{K}}\|x^*-x_r\|^{\frac{1}{K}}\le\|T^K\|^{\frac{1}{K}}\tilde{\epsilon} = \frac{\epsilon}{2}
\end{align}
using Lemma~\ref{lem:lem} (see Supplementary Materials) and the reverse triangle inequality. {Finally, by the triangle inequality, with probability of at least $1-\delta$, we have:}
\begin{equation}
\left|\rho(T) - \text{max}(Y)^{\frac{1}{K}}\right|  \le  \left|\rho(T) - \sup_{x: \|x\|=1}  \|T^{K}x\|^{\frac{1}{K}}\right| + \left|\sup_{x: \|x\|=1}  \|T^{K}x\|^{\frac{1}{K}} - \text{max}(Y)^{\frac{1}{K}}\right| \le \epsilon.
\end{equation}
\end{proof}

\begin{remark}\label{thm:learnability}
According to~\cite{loukas2019graph}, since the optimal interface values are Turing-computable, if the depth of the GNN is at least the diameter of the graph, and a TAGConv layer followed by a feature encoder is Turing-complete, the optimal interface values can be learned.  In our setting, for a problem on a structured grid of size $N\times N$ with two identical rectangular subdomains, this implies that the GNN will be able to learn the optimal interface values given, if and only the GNN has depth at least $2N$, has deep enough feature encoders, and the width of the layers is unbounded.
\end{remark}
\begin{remark}\label{thm:effectiveness}

Theorem~\ref{thm:optimality} guarantees convergence of the loss function to the spectral radius, in the limits of many samples and many stationary iterations.  To the best of our knowledge, such a guarantee is not known for the previous loss functions used in the area~\cite{luz2020learning, greenfeld2019learning}. Moreover, there are substantial improvements in the numerical results using the new loss function in comparison to that of~\cite{luz2020learning}, as shown in Figure~\ref{fig:better_loss}.

\end{remark}
\vspace{0.1em}
\begin{theorem}\label{thm:complexity}
  Assuming bounded subdomain size, the time complexity to evaluate the optimal Schwarz parameters using our method is $O(n)$, where $n$ is the number of nodes in the grid.
\end{theorem}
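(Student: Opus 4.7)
The plan is to bound the cost of each component of the evaluation pipeline separately and observe that each scales linearly in $n$ under the stated hypothesis. The pipeline consists of: (i) decomposing $D$ into subdomains; (ii) building the local subdomain graphs and sparsity patterns of the $L_i$; (iii) performing the GNN forward pass $f^{(\theta)}(D)$; and (iv) assembling the output matrices $L_i^{(\theta)}$ and, optionally, the factorizations of $\tilde{A}_i^\delta$ used inside $M_{\text{ORAS}}^{(\theta)}$.

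For (i), I would cite the linear complexity of the Lloyd-style $k$-means partitioner used in the paper, which yields $S$ subdomains in $O(n)$ time; because subdomain size is bounded, $S = \Theta(n)$. For (ii), since each subdomain has $O(1)$ degrees of freedom and the underlying discretization graph has bounded maximum degree (a standard property of FE/FD meshes on fixed-dimensional domains), the boundary cycle/path in each subdomain is found in $O(1)$ time and the total work is $O(S) = O(n)$. For (iv), each $\tilde{A}_i^\delta$ is of bounded dimension, so its factorization costs $O(1)$, and summing over subdomains gives $O(n)$; likewise $M_{\text{ORAS}}^{(\theta)}$ has $O(n)$ nonzeros.

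The main obstacle is step (iii), the GNN forward pass. I would argue as follows: each TAGConv layer with a fixed polynomial filter order performs a constant number of sparse matrix–vector products with the adjacency matrix of the problem graph, followed by pointwise feature transformations of bounded width. Since the problem graph has bounded degree, its adjacency has $O(n)$ nonzeros, so each such product costs $O(n)$; the pointwise transformations and the feature encoder (MLP applied to each node/edge) cost $O(n)$ per layer by the bounded-width assumption. With the GNN depth treated as a fixed hyperparameter independent of $n$, the total forward-pass cost is $O(n)$.

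Summing the contributions from (i)–(iv) gives a total cost of $O(n)$, establishing the claim. The only point that requires care is making explicit the assumptions being folded into ``bounded subdomain size'': in particular, that the GNN architecture (depth, width, and filter order) is held fixed as $n$ grows, and that the discretization graph has bounded maximum degree, which together with bounded subdomain size implies $S = \Theta(n)$ and $O(1)$ work per subdomain. I would state these assumptions explicitly at the start of the proof, then combine the four bounds in a single concluding sentence.
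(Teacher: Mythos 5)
Your proposal is correct and follows essentially the same route as the paper's proof: linear-time Lloyd-style subdomain generation, plus an $O(n)$ TAGConv forward pass justified by the filter being a fixed-order polynomial in a bounded-degree (hence sparse) adjacency matrix, with the feature/edge networks contributing $O(n)$ pointwise work. Your version is somewhat more explicit about the hidden hypotheses (fixed architecture, bounded maximum degree, $S=\Theta(n)$) and adds the auxiliary steps of sparsity-pattern construction and local factorization, but the substance of the argument is the same.
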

\begin{proof}
Given bounded Lloyd subdomain size and fixed number of Lloyd aggregation cycles, subdomain generation has $O(n)$ time complexity~\cite{bell2008algebraic} (see the Supplementary Material). To evaluate each TAGConv layer, one computes $y = \sum_{\ell=1}^{L}G_{\ell}x_{\ell} + b\mathbf{1}_{n}$, where $L$ is the number of node features, $G_{\ell} \in  \mathbb{R}^{n\times n}$ is the graph filter, $b$ is a learnable bias, and $x_{\ell}\in \mathbb{R}^{n}$ are the node features. Moreover, the graph filter is a polynomial in the adjacency matrix $M$ of the graph: $G_{\ell} = \sum_{j=0}^{J}g_{\ell,j}M^{j}$ where $J$ is a constant and $g_{\ell,j}$ are the coefficients of filter polynomial. Since the graph has bounded node degrees, it implies that $M$ is sparse and the action of $M^j$ has $O(n)$ cost, and therefore, the full TAGConv evaluation also has $O(n)$ cost. Moreover, the cost of edge feature and the feature networks are $O(n)$, resulting in overall $O(n)$ cost.
\end{proof}

\subsection{Training Details}\label{sec:training}

We use a graph neural network based on four TAGConv layers and ResNet node feature encoders consisting of eight blocks after each TAGConv layer; see Supplementary Materials for more details on the structure of the GNN, and Section~\ref{sec:ablation} for an ablation study.  The training set in our study consists of 1000 unstructured grids with piecewise linear finite elements and with grids ranging from 90--850 nodes (and an average of 310 nodes). The grids are generated by choosing either a regular grid (randomly selected 60\% of the time) or a randomly generated convex polygon; pygmsh~\cite{Schlmer_pygmsh_A_Python} is used to generate the mesh on the polygon interior. A sample of the training grids is shown in Figure~\ref{fig:train_exs}. We train the GNN for four epochs with a mini batch size of 25 using the ADAM optimizer~\cite{kingma2014adam} with a fixed learning rate of $10^{-4}$. For the numerical evaluation of the loss function~(\ref{eq:numeric_loss}) we use $K = 4$ iterations and $m = 500$ samples. The code\footnote[1]{All code and data for this paper is at \url{https://github.com/compdyn/learning-oras} (MIT licensed).} was implemented using PyTorch Geometric~\cite{Fey_Lenssen_2019}, PyAMG~\cite{BeOlSc2022}, and NetworkX~\cite{hagberg2008exploring}. All training was performed on an 8-core i9 Macbook Pro using CPU only.
\begin{figure}
  \includegraphics[width = \textwidth]{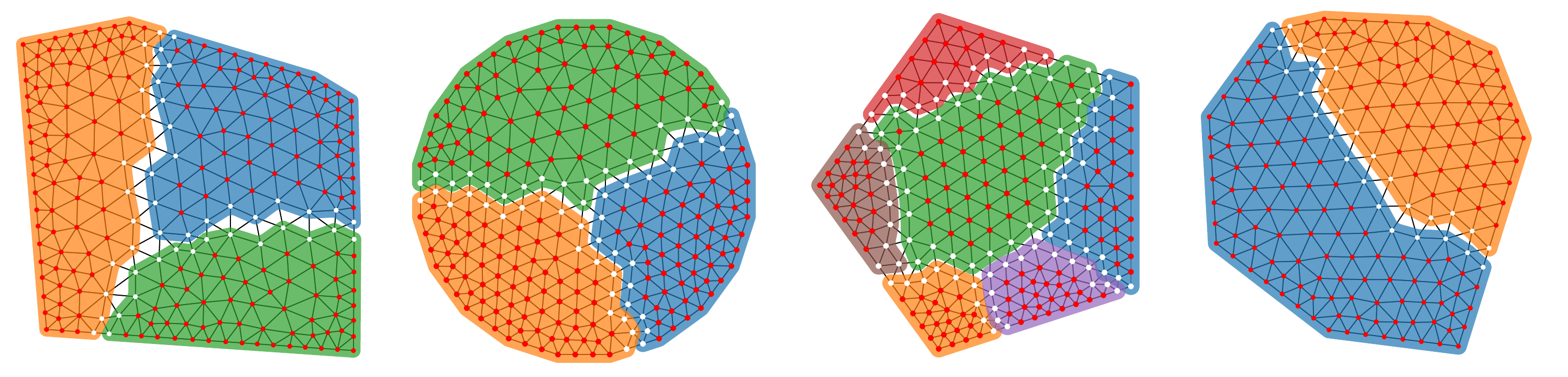}
  \caption{Example grids from the training set.}\label{fig:train_exs}
\end{figure}

Moreover, we train two special networks for use in Section~\ref{sec:twosubd}. The first is a ``Brute Force'' network, which is a single-layer neural network without an activation function, trained only on a structured $10\times 10$ grid with two identical subdomains using the ADAM optimizer. The purpose of training this is to obtain the optimal interface values as a benchmark for comparison, including against our method. The second is the same network used for our method, MLORAS, but overtrained on only the problem in Section~\ref{sec:twosubd}, to understand the learning capabilities of the method and choice of GNN architecture.

\section{Results}

\subsection{Two-subdomain structured grids}\label{sec:twosubd}

We first consider rectangular structured grids with two subdomains. Although restrictive, these problems allow us to directly compare to existing OSM parameters, which are only available for structured grids and exactly two subdomains. We follow~\cite{st2007optimized} and consider the Helmholtz problem~\eqref{eq:Helmholtz} on the unit square with $\eta = 1$ and homogeneous Dirichlet boundary conditions. We discretize the problem on an $N \times N$ rectangular grid with $h = 1/(N+1)$ grid spacing, using the standard five-point finite difference stencil.


\begin{figure}
  \setlength{\tabcolsep}{0pt}
  \begin{tabular}{p{0.275\textwidth} p{0.725\textwidth}}
  \vspace{15pt} \includegraphics[width=0.275\textwidth]{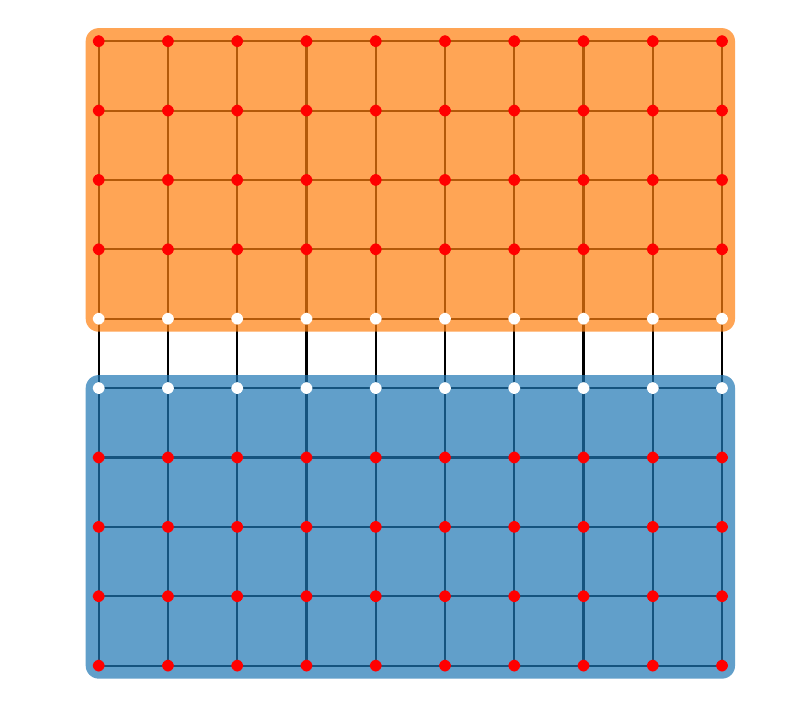} &
  \vspace{0pt} \includegraphics[width=0.725\textwidth]{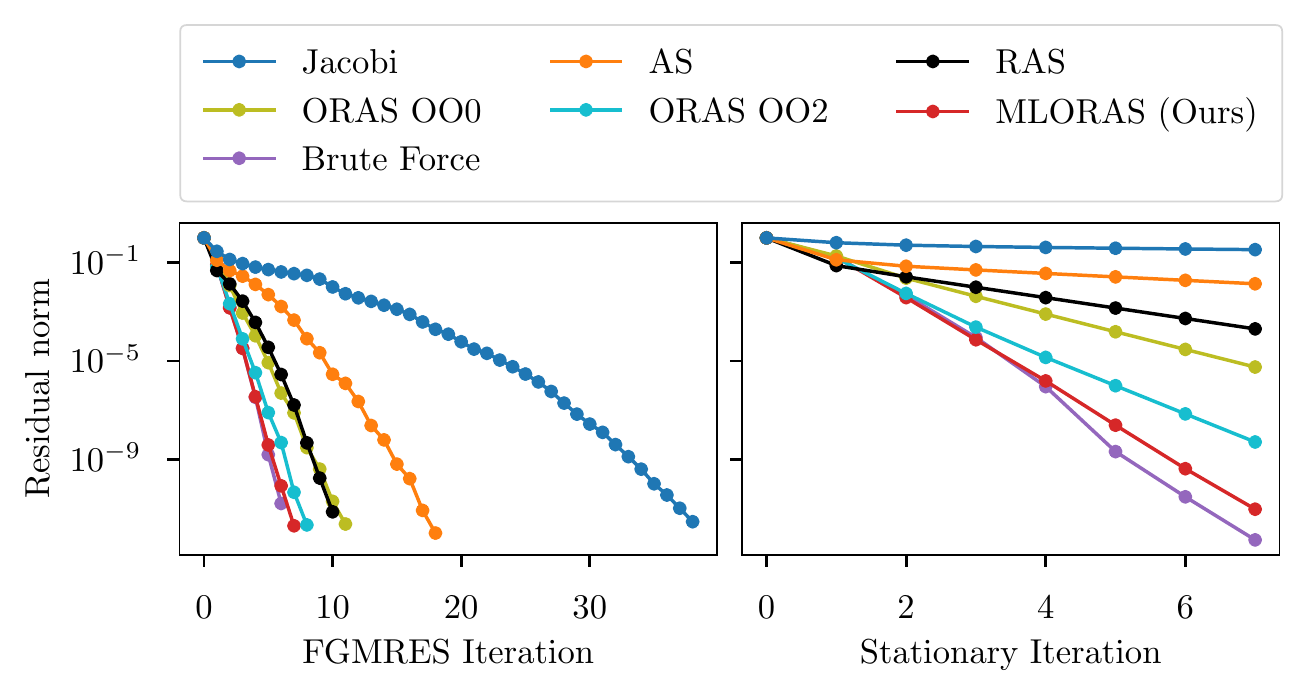}
  \end{tabular}
  \caption{Results for the Helmholtz problem on a $10\times10$ structured grid (left) with two identical subdomains, with convergence plots for the methods used as preconditioners for FGMRES (center) and as stationary algorithms (right).}\label{fig:structs_results10}
\end{figure}

Figure~\ref{fig:structs_results10} shows the results of different methods on a $10\times10$ structured grid with two identical subdomains. We see that the overtrained MLORAS network learns interface parameters that outperform the Restrictive Additive Schwarz (RAS)~\cite{cai1999restricted} method; more significantly, MLORAS also outperforms the existing Optimized-RAS (ORAS) methods that were analytically derived for this specific problem (zeroth-order OO0 and second-order OO2 from~\cite{st2007optimized}).
Moreover, in these results, the MLORAS network almost reaches the performance of the ``Brute Force'' network (obtained by directly optimizing all interface values for this single structured grid), indicating that using a GNN to encode the optimal interface values does not significantly restrict the search space.

\begin{figure}
  \centering
    \includegraphics[width = 1\textwidth,,trim=0 10 0 10]{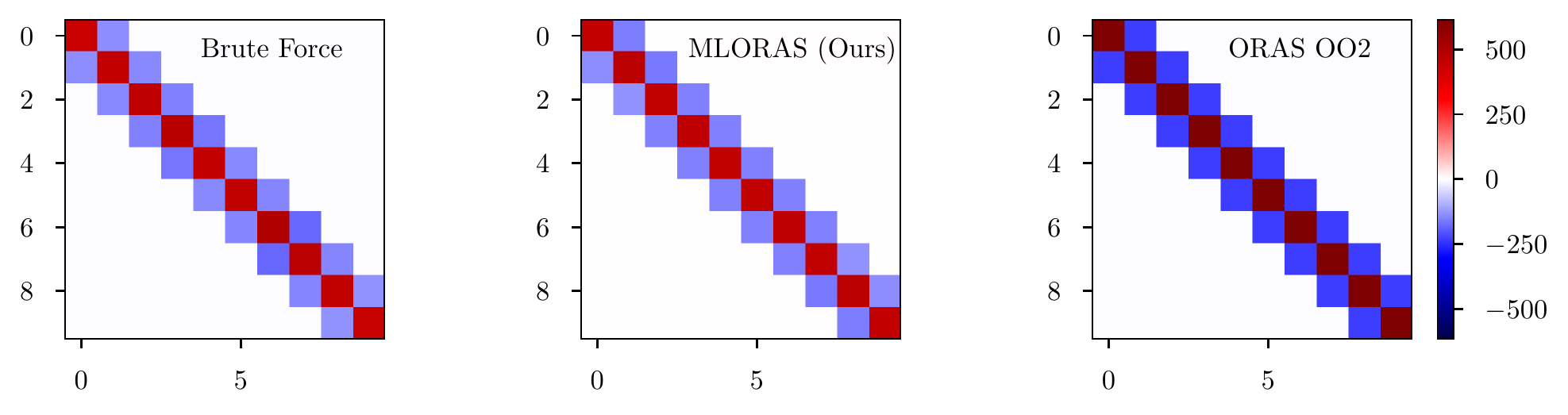}
  \caption{Interface values for the $10\times10$ structured grid with two identical subdomains. From left to right: brute force optimization of the interface values, MLORAS, and second-order ORAS (OO2)~\cite{st2007optimized}.}\label{fig:heatmap}
\end{figure}

To understand the performance of the methods in more detail, Figure~\ref{fig:heatmap} plots the interface values output by the brute force optimization, the MLORAS network, and the OO2 ORAS algorithm. This shows that the MLORAS network is choosing interface values very close to those directly optimized by the brute force method, unlike those selected by the OO2 method.








\subsection{Unstructured grids}\label{sec:result_uns}

To evaluate the performance of the MLORAS network on unstructured grids, we consider 16 unstructured triangular grids in 2D with sizes ranging from about 90 to 40\,000 nodes. These grids are defined on convex subsets of $(0,1)\times(0,1)$; we solve the Helmholtz problem~\eqref{eq:Helmholtz} on these domains with $\eta = 1$ and homogeneous Dirichlet boundary conditions, discretized with piecewise-linear finite elements.

Example convergence plots are shown in Figure~\ref{fig:unst_scale}, where our method (MLORAS) is compared to RAS (Restricted Additive Schwarz~\cite{cai1999restricted}), AS (Additive Schwarz), and Jacobi methods as a preconditioner for FGMRES\@. Here we see that the MLORAS network is able to learn optimized interface parameters for unstructured grids that outperform RAS, and that MLORAS can scale to problems that are much larger than those in the training set, which are all below $n = 1000$ nodes. Importantly, MLORAS retains an advantage over RAS even as the grid size increases.

\begin{figure}
  \centering
\hspace{1.4cm}\includegraphics[width = 0.175\textwidth,trim=0 0 0 40]{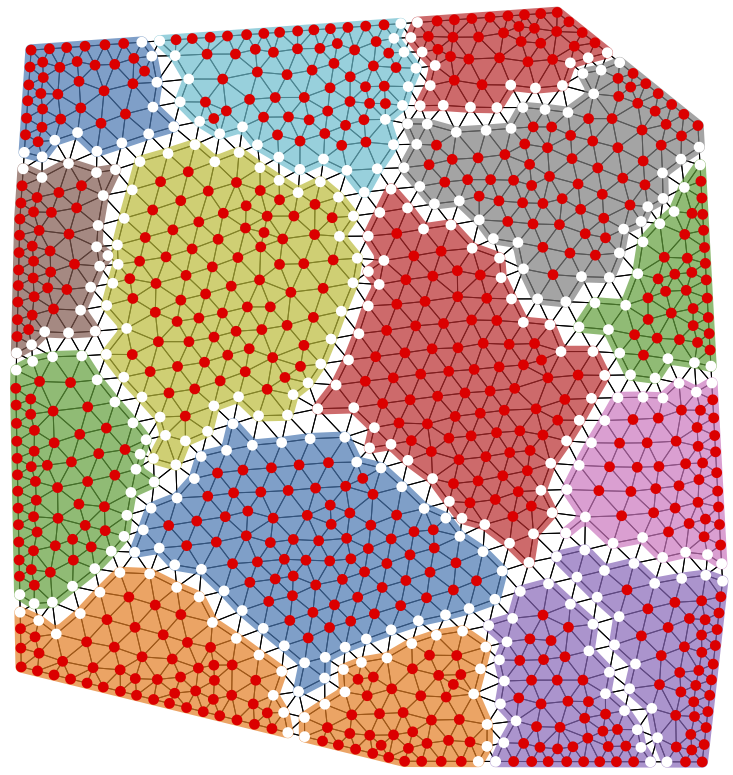}
\hspace{3.175cm}\includegraphics[width = 0.25\textwidth,trim=0 0 0 40]{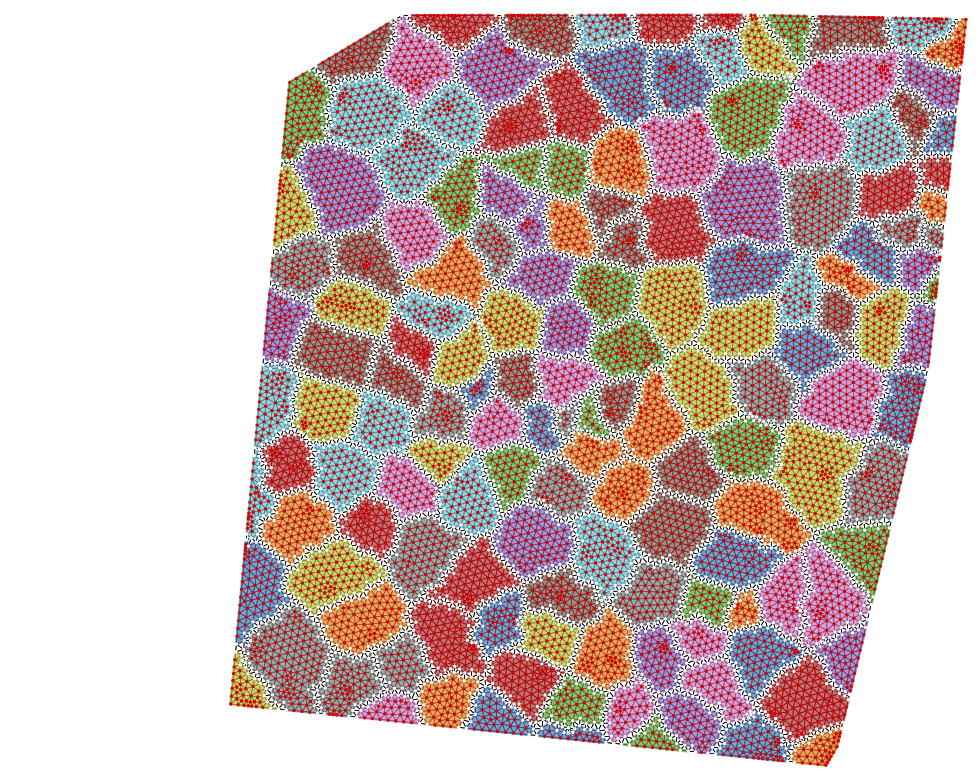}
 \includegraphics[width = \textwidth,trim=0 10 0 0]{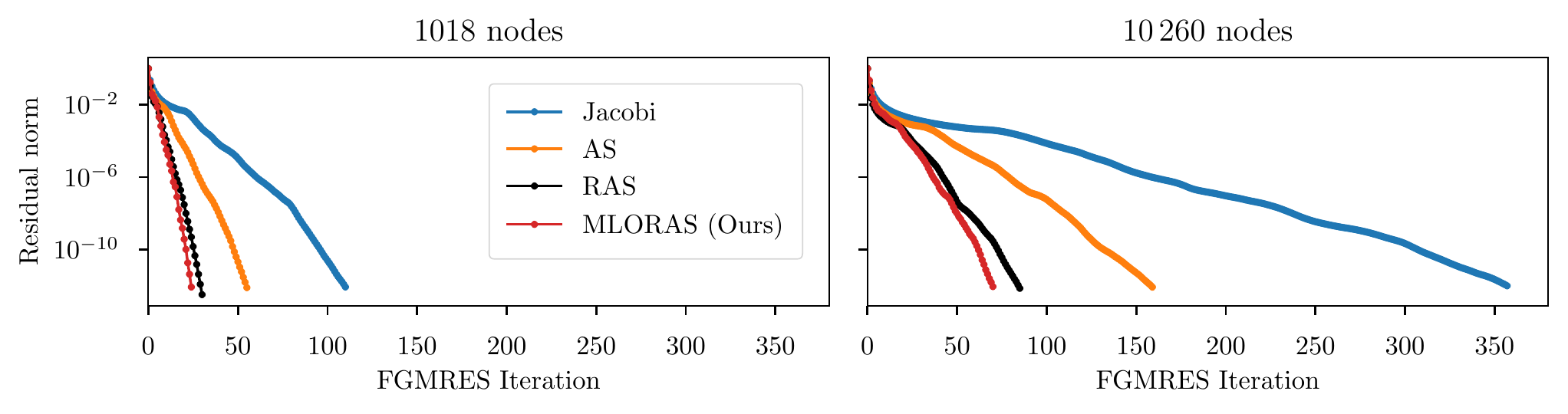}
  \caption{Example convergence on smaller (left) and larger (right) unstructured grids.}\label{fig:unst_scale}
\end{figure}

\begin{figure}
  \centering
    \includegraphics[width = 1\textwidth,trim=0 5 0 5]{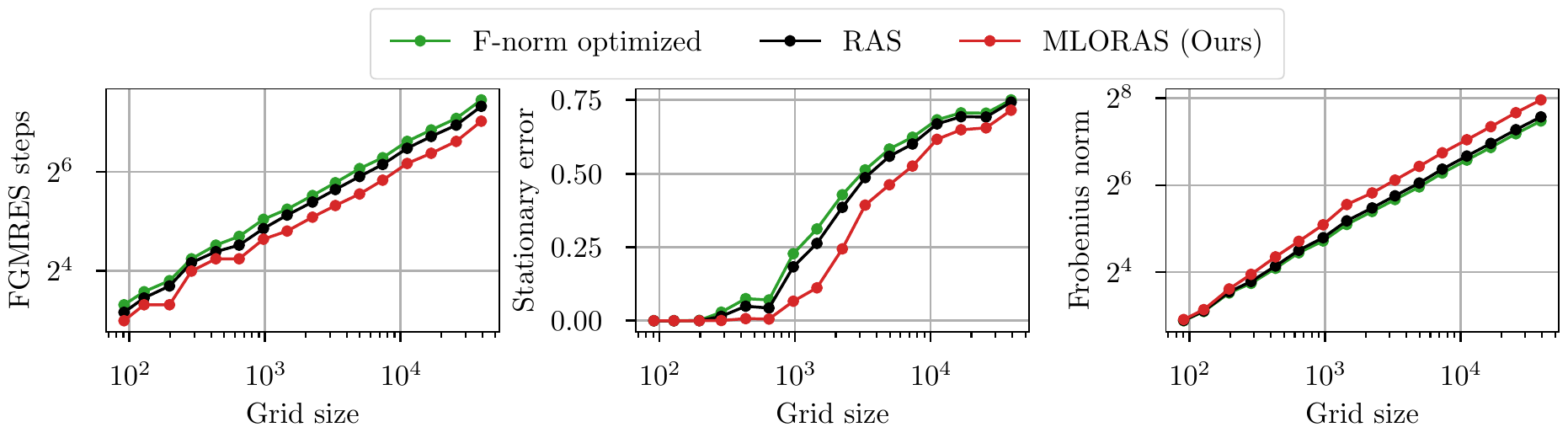}
  \caption{Convergence on all unstructured grids in the testing set. Left: the number of preconditioned FGMRES steps required to solve the problem to within a relative error of $10^{-12}$. Center: error reduction by the stationary iteration after 10 iterations. Right: Frobenius norm for each method on each test problem. }\label{fig:better_loss}
\end{figure}

Figure~\ref{fig:better_loss} shows the performance of three different methods across all unstructured test grids. The three methods are: (1) ``F-norm optimized'': the same network as MLORAS, but trained to directly optimize the Frobenius norm of the error propagation matrix, $\|A\|_F$, (2) the RAS~\cite{cai1999restricted} method, and (3) the MLORAS method. We do not show ORAS results here, as it cannot be applied to unstructured grids.
Figure~\ref{fig:better_loss} reveals two important facts. First, MLORAS consistently outperforms RAS over the entire testing set, both as an FGMRES preconditioner and as a stationary algorithm, and this advantage is maintained even for large grids. Second, we see that the Frobenius norm is indeed a worse choice for the loss function than our new loss in~\eqref{eq:loss_func}. We see this from the fact that MLORAS has a worse Frobenius norm than either of the other two methods, but it has the best convergence rate. In addition, when we explicitly optimize the Frobenius norm (the ``F-norm optimized'' method), we see that we do obtain the lowest Frobenius norm, but this translates to the worst convergence rate.

\subsection{Ablation study}\label{sec:ablation}

To understand the impact of the network structure on performance, we conduct an ablation study by varying the ResNet block length and the number of TAGConv layers. In each case, we train a network on five different training sets, each consisting of 1000 grids generated as described in Section~\ref{sec:training}. The trained networks are tested on 50 unstructured grids, each with 2400 to 2600 nodes. For each architecture, the mean performance is computed, together with error bars from the fivefold repetition, with results shown in Figure~\ref{fig:ablation}. We see that a higher residual block length is always better, but that the optimal number of TAGConv layers is 4, and using more layers actually decreases performance. This phenomenon has been observed in other contexts (e.g.,~\cite{zhao2019pairnorm} and~\cite{chen2020measuring}) and can be attributed to over-smoothing in GNNs. The use of residual blocks in our architecture is, thus, important to allow us to increase network depth without needing to increase the number of GNN layers.

\begin{figure}
  \centering
  \includegraphics[width = \textwidth,trim=0 10 0 10]{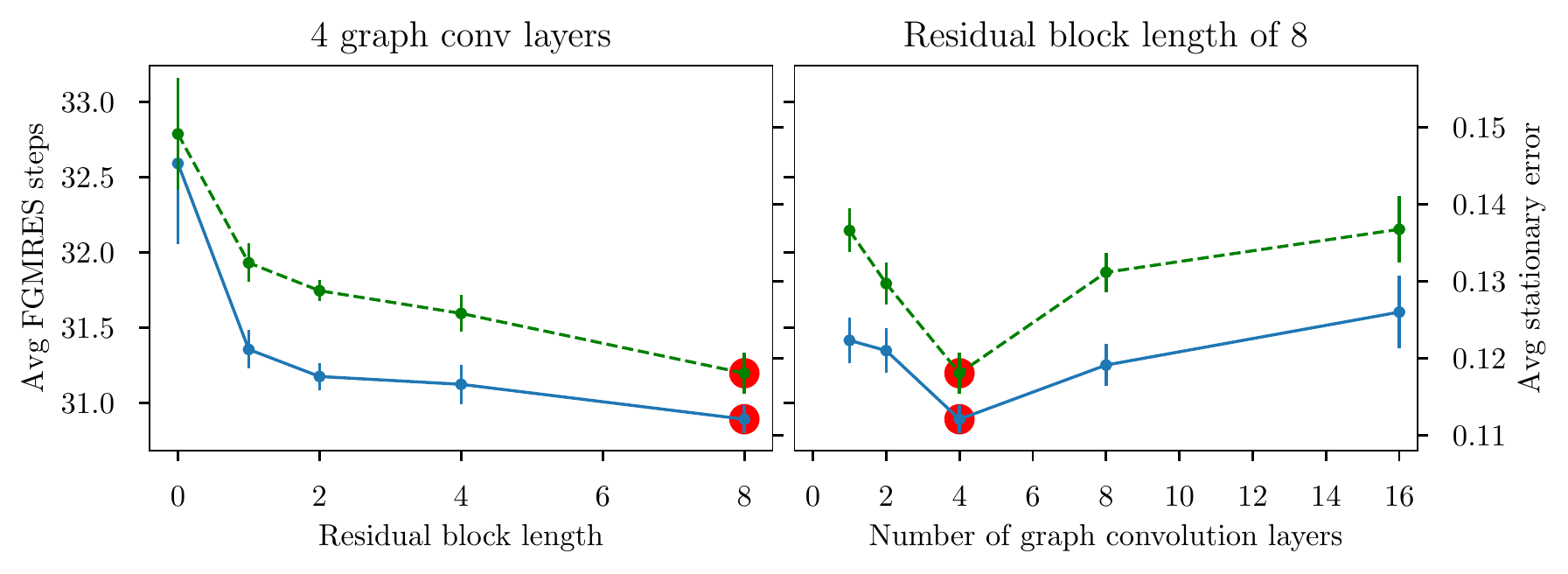}
  \caption{Ablation study results. The left panel varies the residual block length while keeping the number of TAGConv layers fixed at 4. The right panel varies the number of TAGConv layers while keeping the residual block length fixed at 8. The solid blue lines (left axis) show the average number of FGMRES steps needed to reduce the relative error below $10^{-12}$, while the dashed green lines (right axis) show the average stationary algorithm error reduction after 10 iterations. The red circles mark the results for the network with the best performance (residual block length of 8 and 4 TAGConv layers), which was used for all other studies in this paper. Error bars show one standard error of the mean.}\label{fig:ablation}
\end{figure}

\section{Conclusion}\label{sec:conclusion}

We propose an unsupervised learning method based on Graph Convolutional Neural Networks (GCNNs) for extending optimized restricted additive Schwarz (ORAS) methods to multiple subdomains and unstructured grid cases. Our method is trained with a novel loss function, stochastically minimizing the spectral radius of the error propagation operator obtained using the learned interface values. The time complexity of evaluating the loss function, as well as obtaining the interface values using our neural network are all linear in problem size. Moreover, the  proposed method is able to outperform ORAS, both as a stationary algorithm and preconditioner for FGMRES on structured grids with two subdomains, as considered in the conventional ORAS literature. On more general cases, such as unstructured grids with arbitrary subdomains of bounded size, our method outperforms RAS consistently, both as a stationary algorithm and preconditioner for FGMRES. The main limitations of the current work are that the method was studied for two PDE cases, namely the Helmholtz problem in the main paper and the non-uniform Poisson problem in Appendix~\ref{sec:poisson}. We also defer the study of nonlinear or time-dependent PDEs to future work.

\section*{Acknowledgement}
The work of S.P.M. was partially supported by an NSERC Discovery Grant. The authors thank the referees for their insightful comments. The authors have no competing interests to declare.
\clearpage
\bibliographystyle{unsrtnat}
\bibliography{paper-oras-neurips}

\newpage

\begin{appendices}
\maketitle

\section{Useful lemma}

This lemma is useful in the proof of Theorem~\ref{thm:optimality}.  While its proof is quite simple, we are not aware of the result in the literature, and include it here for completeness.
\begin{lemma}\label{lem:lem}
For $x,y\in\mathbb{R}$, with $0\le y \le x$ and any $K\in\mathbb{N}$, $x^{\frac{1}{K}}-y^{\frac{1}{K}}\le(x-y)^{\frac{1}{K}}$
\end{lemma}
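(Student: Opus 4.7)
The plan is to reduce the inequality to a polynomial statement by raising both sides to the $K$-th power. Specifically, I would rewrite the target inequality $x^{1/K} - y^{1/K} \le (x-y)^{1/K}$ as
\[
x^{1/K} \le (x-y)^{1/K} + y^{1/K},
\]
observe that both sides are non-negative (using $0 \le y \le x$), and then raise both sides to the $K$-th power; the inequality is preserved because $t \mapsto t^K$ is monotone increasing on $[0,\infty)$. It then suffices to show
\[
x \le \bigl((x-y)^{1/K} + y^{1/K}\bigr)^K.
\]

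The key step is to expand the right-hand side by the binomial theorem:
\[
\bigl((x-y)^{1/K} + y^{1/K}\bigr)^K = \sum_{j=0}^{K} \binom{K}{j} (x-y)^{j/K}\, y^{(K-j)/K}.
\]
The term at $j=K$ equals $x-y$, and the term at $j=0$ equals $y$; these two contributions already sum to $x$. Every other term ($1 \le j \le K-1$) is non-negative, since $x-y \ge 0$ and $y \ge 0$. Therefore the sum is at least $x$, which is exactly what we needed.

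A secondary route, in case one prefers not to invoke the binomial theorem, would be to substitute $a = x^{1/K}$ and $b = y^{1/K}$ (both non-negative, with $a \ge b$) and reduce the claim to $(a-b)^K \le a^K - b^K$. Writing $t = b/a \in [0,1]$ this becomes $(1-t)^K + t^K \le 1$, which is elementary (e.g., by noting it holds with equality at $t=0$ and $t=1$, and that the function $t \mapsto (1-t)^K + t^K$ is convex with a unique interior minimum). I do not anticipate a real obstacle here; the only mild subtlety is ensuring that $(x-y)^{1/K}$ and $y^{1/K}$ are well-defined real numbers, which is guaranteed by the hypothesis $0 \le y \le x$. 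I would therefore favor the binomial expansion route for its brevity.
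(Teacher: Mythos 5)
Your primary argument (raise to the $K$-th power, expand $\bigl((x-y)^{1/K}+y^{1/K}\bigr)^K$ by the binomial theorem, and observe that the $j=0$ and $j=K$ terms already sum to $x$ while the rest are non-negative) is exactly the proof given in the paper, and it is correct. The alternative route you sketch is also fine but unnecessary.
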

\begin{proof}
Since $y\le x$, the binomial theorem gives that
\begin{equation}
  x\le (x-y) + y + \sum\limits_{i=1}^{K-1}\binom{K}{i}y^{\frac{i}{K}}(x-y)^{\frac{K-i}{K}}=(y^{\frac{1}{K}}+(x-y)^{\frac{1}{K}})^K.
\end{equation}
Taking the $K^{\text{th}}$ root of both sides and rearranging gives the stated result.
\end{proof}

\section{Solution plots}

We have compared the solution plot of our method with RAS for the Holmholtz problem. We consider a $100\times 100$ structured grid on the $(0,1)\times(0,1)$ domain, and consider the following true solution for the problem: $u^{*} = \text{sin}(8\pi x) + \text{sin}(8\pi y)$. We then start with an initial random guess with $L_2$ norm of 1. We apply 10 iterations of RAS and MLORAS (ours) as stationary algorithm and obtain solution for each method. Moreover, we run 10 iterations of FGMRES with MLORAS and RAS preconditioners on the initial guess and obtain predictions for both methods. The results are shown in Figure~\ref{fig:mloras-solution-plots} for MLORAS and  Figure~\ref{fig:ras-solution-plots} for RAS. 

\begin{figure}[!h]
  \centering
  \includegraphics[width = 1\textwidth]{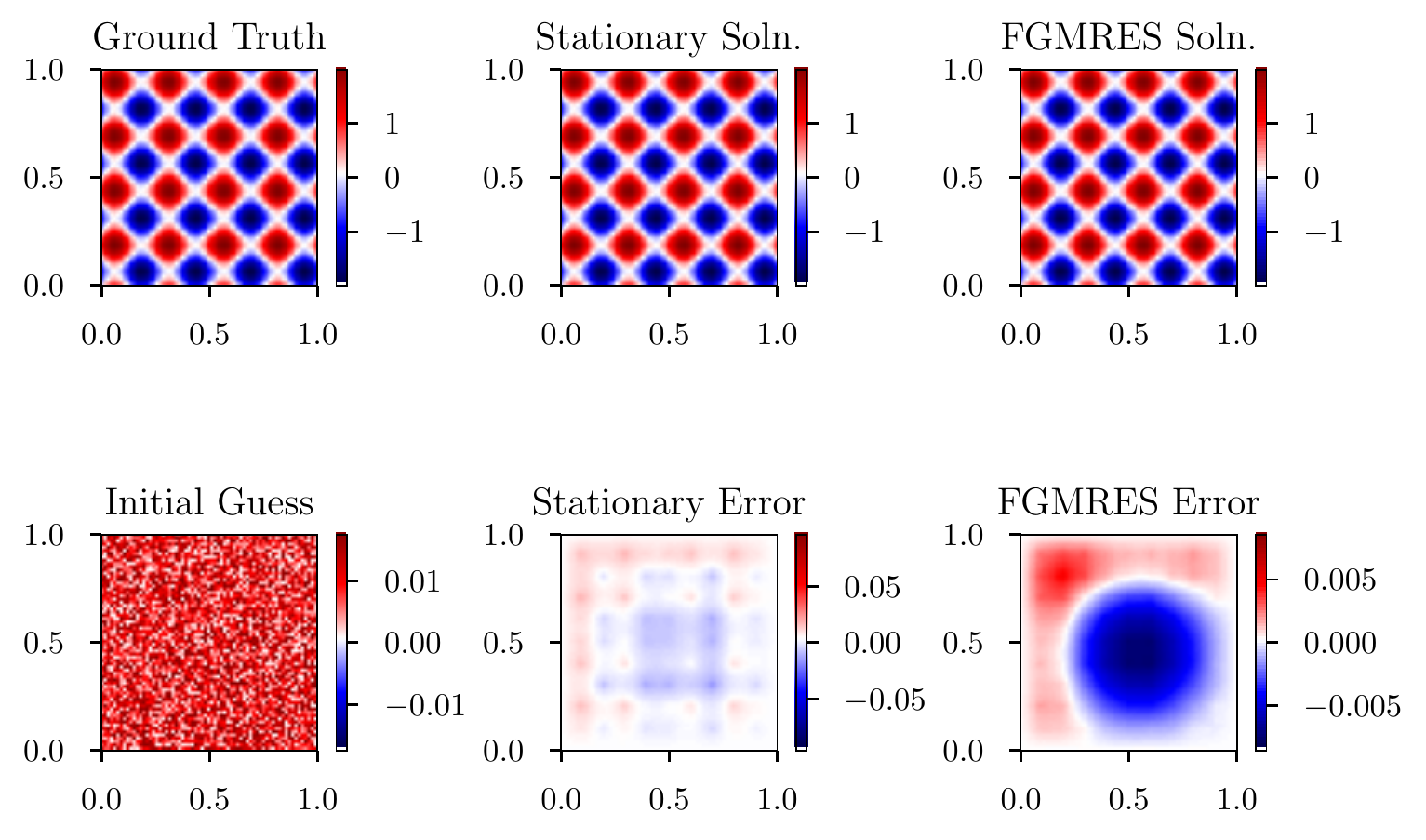}

    \caption{MLORAS (ours) solution plots. Top left: ground truth, top middle: MLORAS stationary solution after 10 iterations, top right: FGMRES solution with MLORAS preconditioner after 10 steps, bottom left: initial guess, bottom middle: error of MLORAS stationary solution ($L_2$ norm of error = 0.231), bottom right: error of FGMRES with MLORAS preconditioner solution ($L_2$ norm of error = 0.084).}
    
  \label{fig:mloras-solution-plots}
\end{figure}

\begin{figure}[!h]
  \centering
  \includegraphics[width = 1\textwidth]{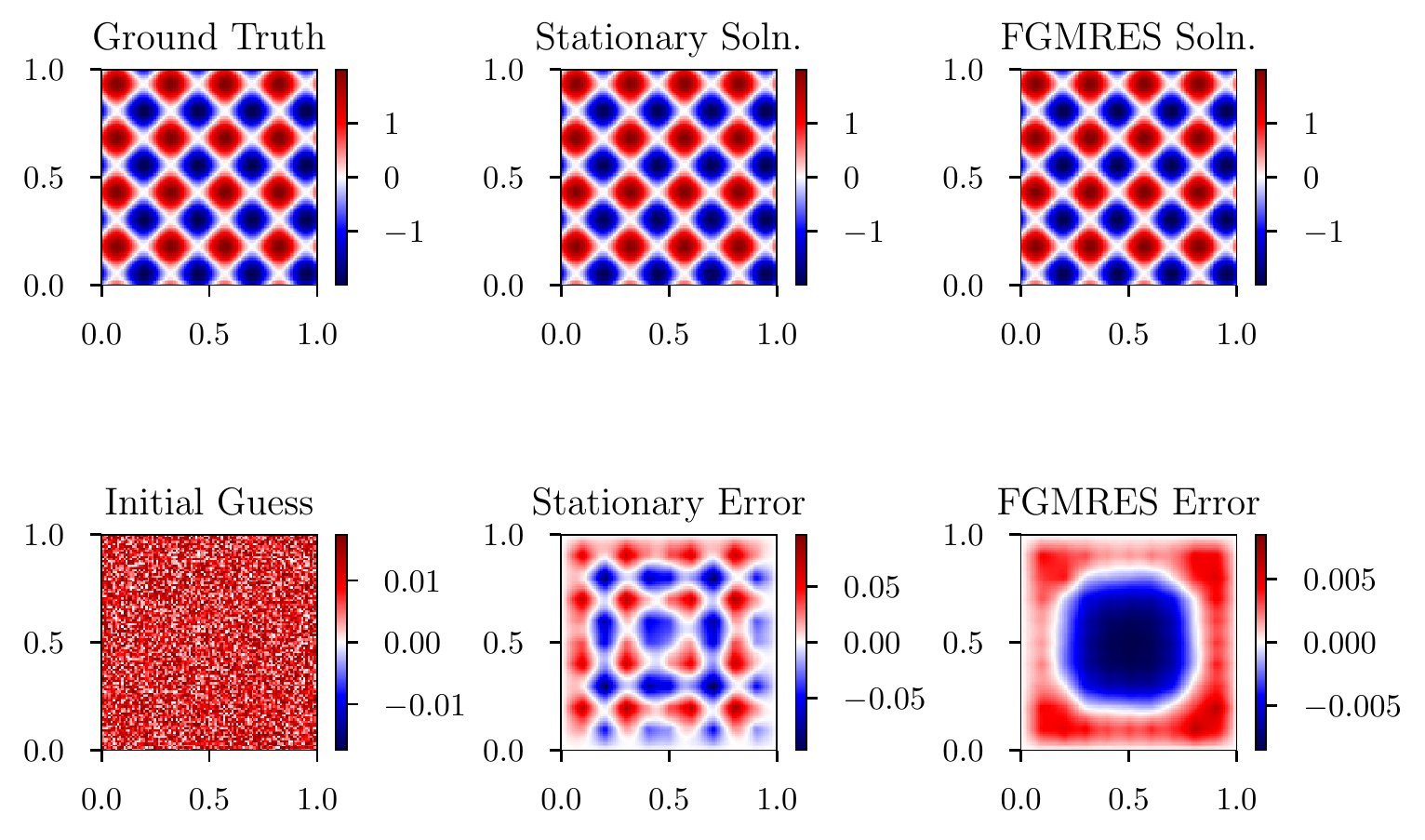}

 \caption{RAS solution plots. Top left: ground truth, top middle: RAS stationary solution after 10 iterations, top right: FGMRES solution with RAS preconditioner after 10 steps, bottom left: initial guess, bottom middle: error of RAS stationary solution ($L_2$ norm of error = 6.526), bottom right: error of FGMRES with RAS preconditioner solution ($L_2$ norm of error = 0.146).}  
 \label{fig:ras-solution-plots}
 
\end{figure}

\section{Neural network}

\subsection{Inputs and Output}

\paragraph{Inputs:} The GNN takes the grid $G$ as its input which has three main components, namely $D_{\text{node}}$ (node feature matrix), $D_{\text{edge}}$ (edge feature matrix), and $A$ (the graph adjacency matrix). Every node has a binary feature, and its value is one if it is on a boundary of a subdomain and zero otherwise. Therefore, for every node, the corresponding element in $D_{\text{node}}$ indicates whether that point is on a boundary or not. In other words, the binary node feature determines the grid decomposition. $D_{\text{edge}}$ consists of the edge values obtained from discretization of the underlying PDE.

\paragraph{Outputs:}After passing the input to the GNN architecture which consists of node and edge convolution blocks and is fully described in the following subsection, the learned edge weights are obtained for every edge in the grid. However, only the edges between the nodes on the subdomain boundaries (considering self-loops) are of our interest so we mask the rest of the edges. Therefore, the output of the GNN is the learned values for edges connecting nodes on the boundary of a subdomain which are referred to as interface values in the paper. For the $i$-th subdomain, the interface value matrix is referred to as $L_{i}^{\theta}$, where $\theta$ represents the GNN learnable parameters (see Equation~\ref{eq:GNNinout}). Figure~\ref{fig:structs_results10} shows an example of the sparsity pattern of an $L$ matrix for each of the identical subdomains of the $10\times10$ structured grid in Figure~\ref{fig:heatmap}. Also note that the interface values are the nonzero elements of the corresponding $L$ matrix.

\subsection{Architecture}
\label{appendix:GNN}

The overall architecture of the GNN is shown in Figure~\ref{fig:GNN_architecture}.
The GNN takes a graph as its input and sends node and edge features to the node convolution and edge feature preprocessing blocks, respectively, both of which are shown in Figure~\ref{fig:Node_conv_Edge_preprocess_block}.  Each node convolution block consists of a TAGConv layer with 2-size filters and 128 hidden units, followed by a ReLU activation, an instance norm layer, and a feature network block. The feature network block, as shown in Figure~\ref{fig:Feature_ResNet_Fig}, consists of 8 blocks with residual connections between each; furthermore, each of the blocks consists of a layer norm followed by a fully connected layer of size 128, followed by a ReLU activation, and finally another  fully connected layer of size 128. The edge feature preprocessing block takes the edge features and applies fully connected layers, ReLU nonlinearities, and instance normalization, before the graph convolution pass.
\begin{figure}[!h]
  \centering
  \includegraphics[width = 1\textwidth]{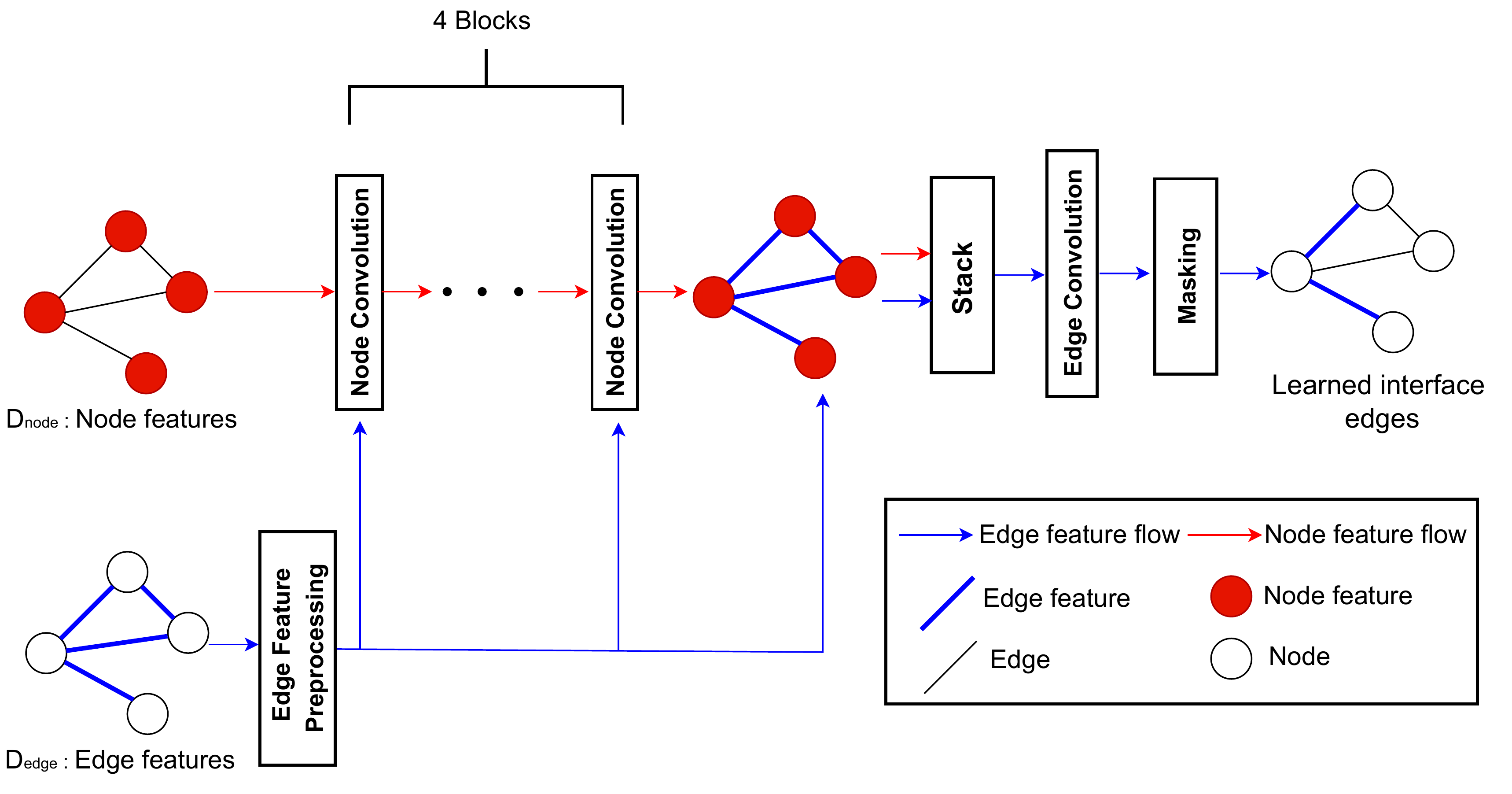}
    \caption{Overall GNN architecture.}
  \label{fig:GNN_architecture}
\end{figure}

\begin{figure}[!h]
  \centering

    \includegraphics[width = 0.405\textwidth]{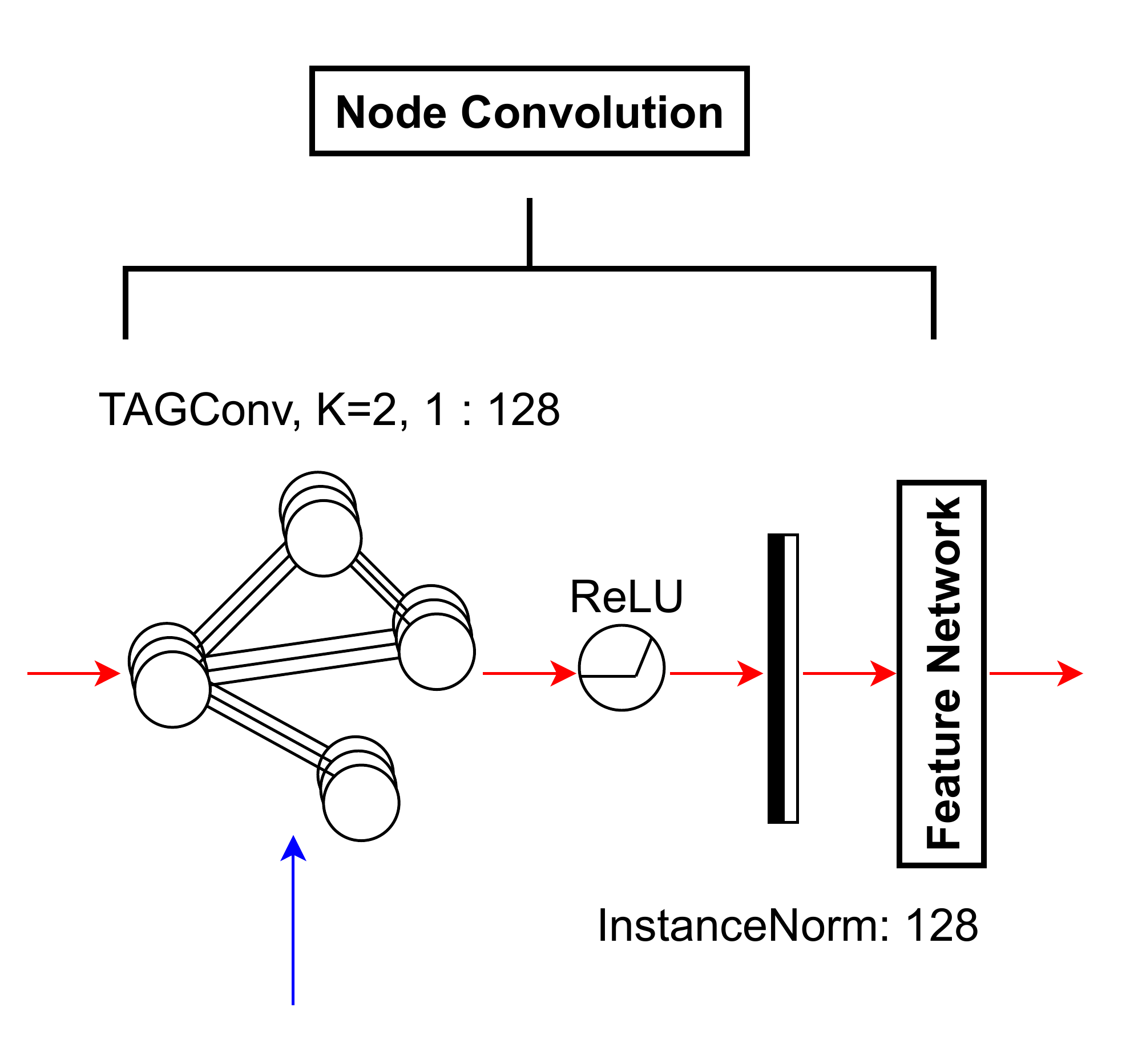}
    \includegraphics[width = 0.585\textwidth]{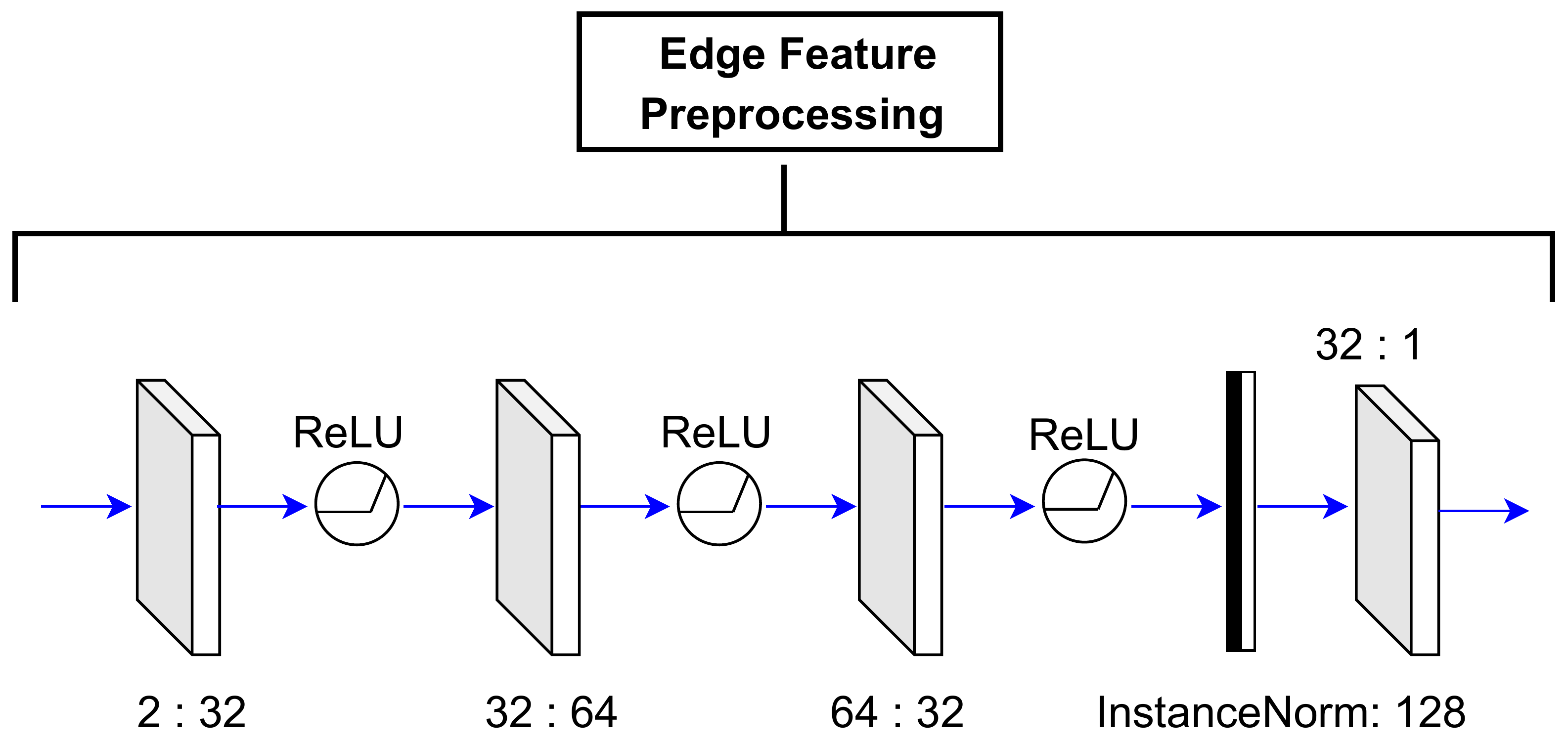}

    \caption{Left: Node convolution block. Right: Edge feature preprocessing block. }
  \label{fig:Node_conv_Edge_preprocess_block}
\end{figure}

\begin{figure}[!h]
  \centering

  \includegraphics[width = 0.8\textwidth]{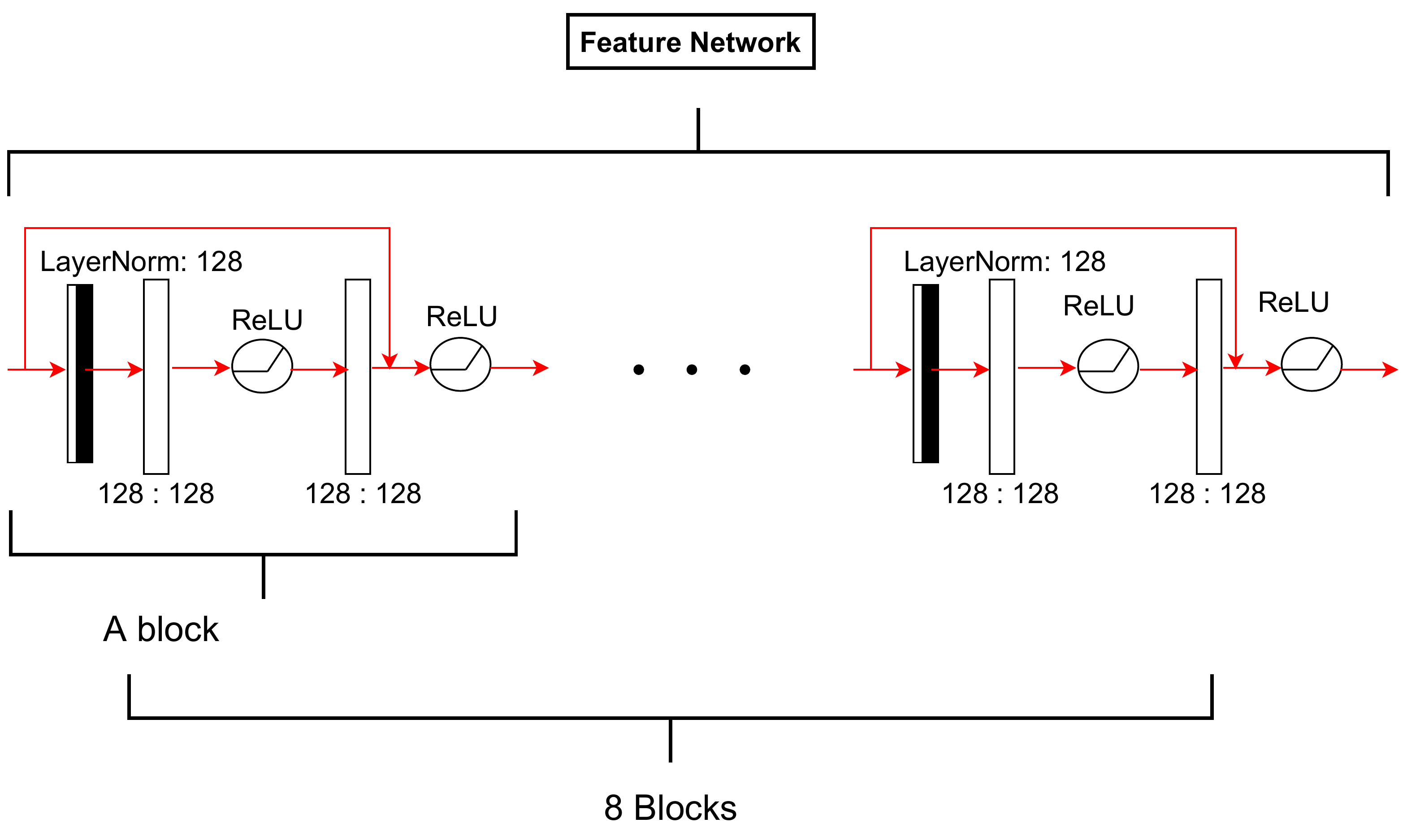}

    \caption{Feature network blocks.}
  \label{fig:Feature_ResNet_Fig}
\end{figure}

After passing through all of the node convolution blocks, the edge and node features are sent to a stack block.  This block simply stacks node features onto the edges adjacent to each node. For every edge, $(u, v)$, where $u$ and $v$ are the nodes on that edge, denote the node and edge features by $E_u, E_v, E_{(u,v)}$, which are the inputs to the stack block. The block then stacks these features and outputs them as the new edge features for the edge $(u,v)$.
Following the stack layer is the edge convolution block, depicted in Figure~\ref{fig:Edge_Conv_Fig}, which takes the stacked edge and node features and passes them through a series of fully connected layers, ReLU activation functions, and layer norms. It is noteworthy that the size of the input to the edge convolution block is 257, since following the description of the stack block, the two node features, each of size 128, and the edge feature of size 1 are stacked together.  The output from the edge convolution block is, finally, passed through a masking block that outputs the interface values.  This masks the interior edge values, i.e., takes the output of the GNN (one value per each edge in the graph) and multiplies those values that are not on the boundary of a subdomain by zero, to restrict the output from the GNN to the desired edges in the graph.
\begin{figure}[!h]
  \centering
    \includegraphics[width = 0.8\textwidth]{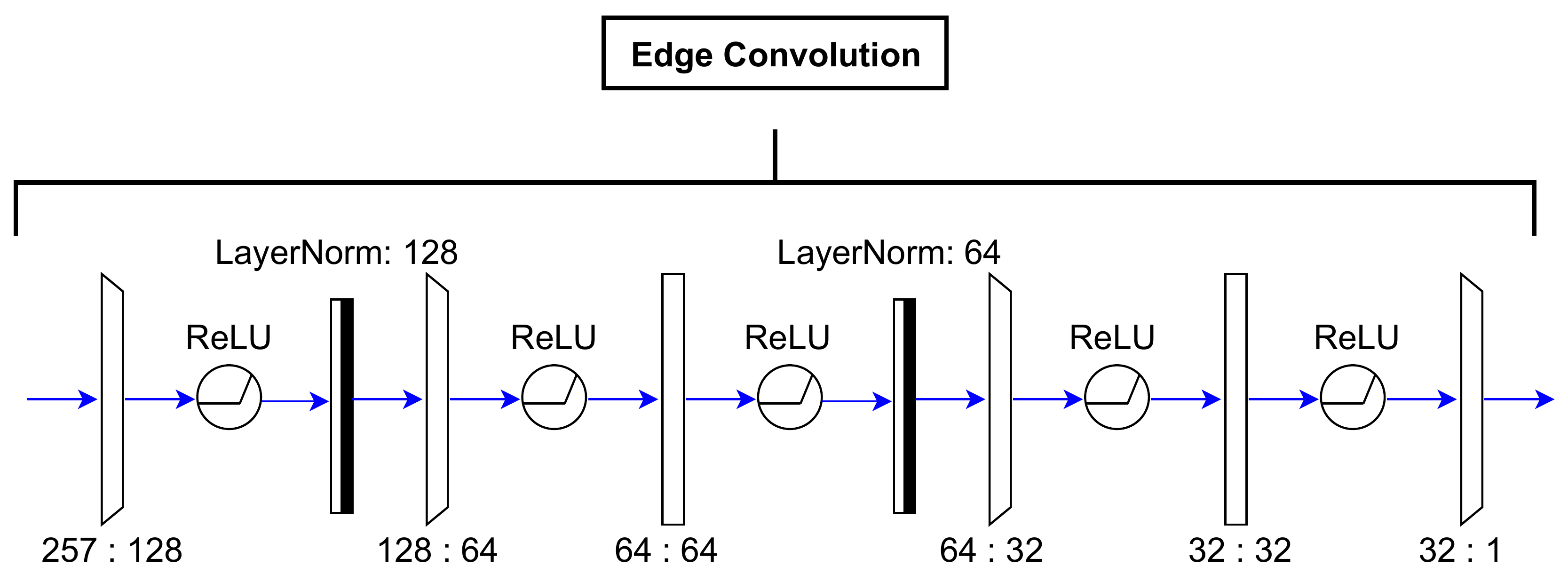}

    \caption{Edge convolution block.}
  \label{fig:Edge_Conv_Fig}
\end{figure}

\section{Poisson problem with discontinuous diffusion coefficient}
\label{sec:poisson}
We also consider the 2D Poisson problem with discontinuous diffusion coefficient which is formulated as follows:
\begin{equation}
- \div\kappa(x,y)\nabla u = f \quad\text{in} \;\Omega,\;\;\; \kappa(x,y) = \begin{cases}
    1000 & 0<x<0.5 \\
    1 & 0.5\le x<1.
  \end{cases}
\end{equation}where $\Omega$ is, as before, defined as a convex
  subset of $(0,1)\times(0,1)$ and
  $\kappa(x,y)$ is the discontinuous diffusion coefficient. For this
  problem, we consider nine domains with unstructured triangular
  grids, with sizes ranging from about 100 to over 30k nodes. The
  subdomains are generated using Lloyd aggregation (fully explained in
  Appendix~\ref{sec:Lloyd}), with a fixed ratio of 0.015. We note
  that the subdomains are not constrained in any way, and a single
  subdomain may contain parts of the domain with different diffusion
  coefficients. The results of our method compared with RAS baseline
  for both the stationary algorithm and the FGMRES preconditioner are
  shown in Figure~\ref{fig:poisson_coeff}, and show little
  qualitative difference with the earlier results for Helmholtz.

\begin{figure}[!h]
  \centering
  \includegraphics[width = 1\textwidth]{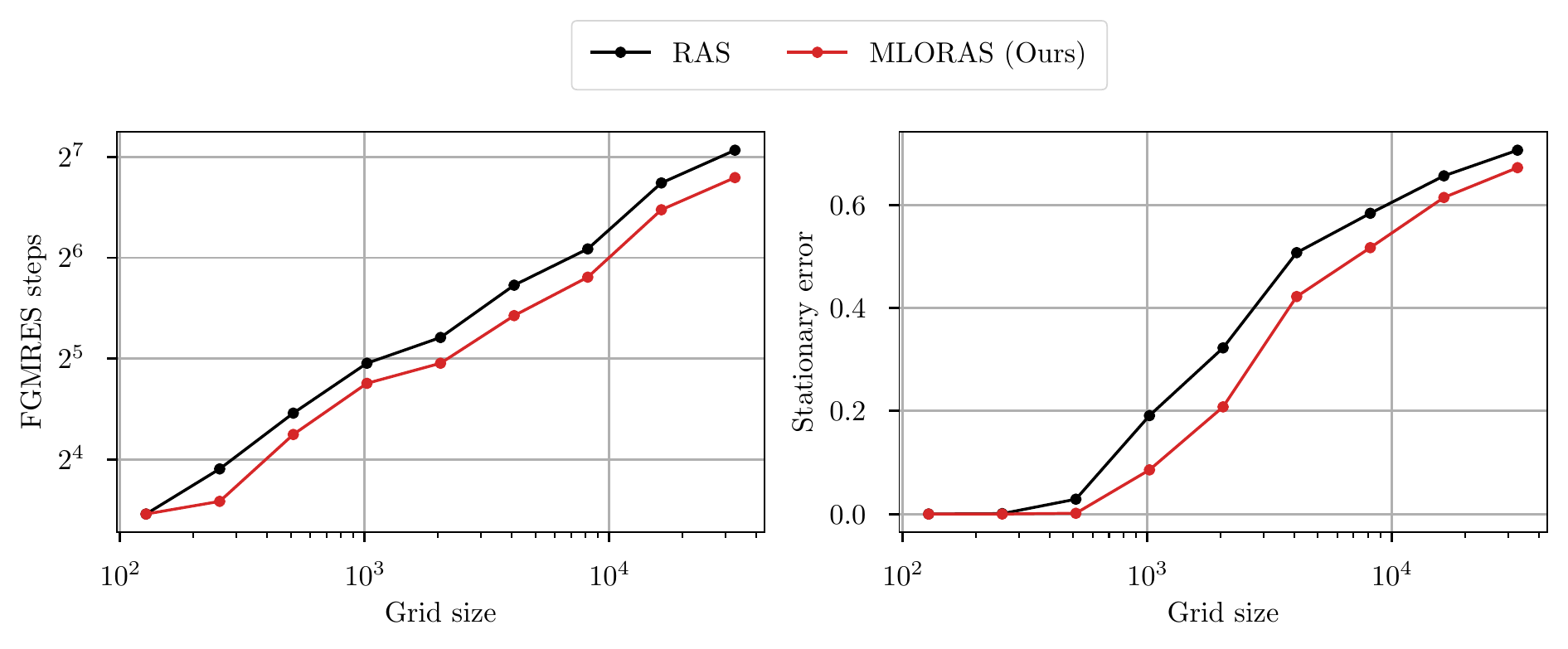}
    \caption{Discontinuous diffusion coefficient for Poisson problem on various size grids. Left: the number of preconditioned FGMRES steps required to solve the problem to within a relative error of $10^{-12}$. Right: error reduction by the stationary iteration after 10 iteration.}
  \label{fig:poisson_coeff}
\end{figure}

\section{Lloyd aggregation}
\label{sec:Lloyd}

Lloyd's algorithm~\cite{lloyd1982least} is a standard approach for partitioning data, closely related to $k$-means clustering, that can be used (for example) to find close approximations to centroidal Voroni tesselations.  Here, we use a modified form of Lloyd's algorithm, known as Lloyd aggregation~\cite{bell2008algebraic}, to partition a given set of degrees of freedom, $V$, into the non-overlapping subdomains, $V_i^0$, for $i\in\{1,2,\ldots,S\}$, needed as input to our algorithm.  (Note that, here, we change notation from the paper and use $V$ to denote the vertices in the graph associated with the matrix, rather than $D$ to denote the index set of degrees of freedom.)  Consider a 2D planar graph, $G$, the set of all edges $E$, the set of all of its nodes $V$, and $V_c\subseteq V$. The nodes in $V_c$ serve as the centers of ``clusters'' in the graph that will define the subdomains, $V_i^0$. These regions are obtained based on the closest center to each graph node, where the distance is measured by the number of edges covered in the shortest path between two nodes (denote distance in the graph between node $i$ and $j$ by $g_{ij}$). Define the centroid of a region as the farthest node from the boundary, breaking possible ties by random choice. We use a  modified version of the  Bellman-Ford algorithm,  commonly used for obtaining the nearest center to every node in $V$ and its associated distance~\cite{cormen2001introduction}. Let $\vec{n}$ be a list of graph nodes whose $i$-th element is the nearest center to the $i$-th node of the graph, and let $g_j$ be the graph distance from node $j$ to $n_j$; then, the modified Bellman-Ford algorithm is shown in Algorithm~\ref{alg:BellFord}.

\begin{algorithm}
\caption{Modified Bellman-Ford}\label{alg:BellFord}
\begin{algorithmic}[1]
\State \textbf{Input} $E$: The set all edges, $V$: The set of all nodes, $V_c$: The set of initial center nodes.
\State $g_i = \infty \;\;\forall_{i\in\{1,2,...,|V|\}}$
\State $n_i = -1 \;\forall_{i\in\{1,2,...,|V|\}}$
\For {$c\in V_c$}
\State $g_c \leftarrow 0$
\State $n_c \leftarrow c$
\EndFor
\While {True}
\State Finished $\leftarrow$ True
\For {$(i,j)\in E$}
\If {$g_i + g_{ij} < g_j$}
\State $g_j \leftarrow g_i + g_{ij}$
\State $n_j \leftarrow n_i$
\State Finished $\leftarrow$ False
\EndIf
\EndFor
\If {Finished}
\State \textbf{return} $\vec{g}, \vec{n}$
\EndIf
\EndWhile
\end{algorithmic}
\end{algorithm}

While this is an iterative computation, it has finite termination when the values in $\vec{g}$ and $\vec{n}$ stop changing.  After running this modified Bellman-Ford algorithm, Lloyd's algorithm modifies the clusters by selecting the centroid of every subdomain as its new center, then iterates, using the modified Bellman-Ford algorithm to calculate new distances and nearest centers. Given updated center positions, it forms the new subdomains. The full Lloyd algorithm is shown in Algorithm~\ref{alg:Lloyd}, where we define the set of border nodes, $B$, as the set of all nodes that are connected by an edge to a node that has a different nearest center node.  The key point here is that we use Modified Bellman-Ford to assign closest centers, then compute the set of border nodes, then find the new centers as those that are further from the border set within each of the original subdomains (using graph distances from $B$, but original center assignment in $\vec{n}$).

\begin{algorithm}
\caption{Lloyd Aggregation}\label{alg:Lloyd}
\begin{algorithmic}[1]
\State \textbf{Input} $K$: Number of iterations, $E$: The set of all edges, $V$: The set of all nodes, $V_c$: The set of initial center nodes.
\For {$i = 1,2,3,..., K$}
\State $\vec{g}, \vec{n} \leftarrow$ Modified Bellman-Ford$(E, V, V_{c})$
\State $B \leftarrow \emptyset$
\For {$(i,j)\in E$}
\If {$n_i \ne n_j$}
\State $B \leftarrow B \cup \{i,j\}$
\EndIf
\EndFor
\State $\vec{g},\vec{x} \leftarrow$ Modified Bellman-Ford$(E, V, B)$
\State $V_{c} \leftarrow \{i\in V: g_i> g_j \;\;\forall {n_i=n_j}\}$
\EndFor
\State \textbf{return} $\vec{n}$
\end{algorithmic}
\end{algorithm}
\paragraph{Time Complexity:}

Assuming each node's initial distance to a center node is bounded independently of $|V|$, and also assuming that each node's degree is bounded independently of $|V|$, Algorithm~\ref{alg:BellFord} runs a $|V|$-independent number of iterations to determine one nearest center node for every point. Thus, Algorithm~\ref{alg:BellFord} is $O(|V|)$ in our case. This is run a $|V|$-independent number of times in Algorithm~\ref{alg:Lloyd}, to generates the subdomains, resulting in an overall algorithmic cost of $O(|V|)$.

\end{appendices}

\end{document}